\documentclass{article}
\PassOptionsToPackage{numbers, compress}{natbib}
\usepackage[preprint]{neurips_2026}

\usepackage[utf8]{inputenc} 
\usepackage[T1]{fontenc}    
\usepackage{hyperref}
\hypersetup{
     colorlinks   = true,
     citecolor    = blue
}
\usepackage{url}            
\usepackage{booktabs}       
\usepackage{amsfonts}       
\usepackage{nicefrac}       
\usepackage{microtype}      
\usepackage[table]{xcolor}
\usepackage{graphicx}
\usepackage{multirow}
\usepackage{tabularx}
\usepackage{array} 
\usepackage{xspace}
\usepackage{enumitem}
\definecolor{colorA}{RGB}{235,245,255}
\definecolor{colorB}{RGB}{237, 255, 208}
\newcommand{\eg}{\textit{e.g.}\@\xspace}
\newcommand{\ie}{\textit{i.e.}\@\xspace}
\newcommand{\zm}[1]{{\color{red}\textbf{Zhuoming: #1}}}

\newcommand*\samethanks[1][\value{footnote}]{\footnotemark[#1]}
\usepackage{orcidlink}
\usepackage[accsupp]{axessibility} 

\usepackage{amsmath,amssymb,amsfonts,amsthm}
\usepackage{bm}

\newtheorem{theorem}{Theorem}
\newtheorem{lemma}[theorem]{Lemma}
\newtheorem{proposition}[theorem]{Proposition}

\newtheorem{definition}[theorem]{Definition}

\usepackage{xcolor}
\usepackage{tikz}
\usetikzlibrary{arrows.meta, positioning, calc, fit, backgrounds}
\usepackage{comment}
\usepackage{xspace}

\makeatletter
\DeclareRobustCommand\onedot{\futurelet\@let@token\@onedot}
\def\@onedot{\ifx\@let@token.\else.\null\fi\xspace}
\def\eg{\emph{e.g}\onedot}

\def\ie{\emph{i.e}\onedot}

\makeatother

\newcommand{\R}{\mathbb{R}}

\newcommand{\cZ}{\mathcal{Z}}

\newcommand{\cL}{\mathcal{L}}

\newcommand{\cF}{\mathcal{F}}

\DeclareMathOperator{\E}{\mathbb{E}}

\DeclareMathOperator*{\argmin}{arg\,min}

\newcommand{\vfield}{v_{\theta}}

\newcommand{\Lfm}{\cL_{\mathrm{FM}}}

\newcommand{\ours}{DRIFT\xspace}        

\newcommand{\Rad}{\mathfrak{R}}
\newcommand{\cH}{\mathcal{H}}
\newcommand{\Risk}{\mathcal{R}}
\newcommand{\App}{\mathrm{App}}

\definecolor{revisionred}{RGB}{176,82,82}

\newcommand{\todo}[1]{\textcolor{red}{\textbf{[TODO: #1]}}}

\usepackage[capitalize]{cleveref}

\title{DRIFT: A Residual Flow Adapter for Decoding Continuous Outputs in Vision-Language Models}

\author{
Zhuoming Liu$^{1}$\thanks{Co-first Author.}, Jinhong Lin$^{1} \samethanks[1]$, Kwan Man Cheng$^{1} \samethanks[1]$, Lin Zhang$^{1}$, Shayok Bagchi$^{2}$, Yin Li$^{1}$\\
$^{1}$University of Wisconsin--Madison~~~~~~$^2$West Lafayette Jr./Sr. High School\\
\href{https://dragonlzm.github.io/DRIFT}{https://dragonlzm.github.io/DRIFT}
}

\begin{document}
\maketitle

\begin{abstract}
Many modern vision-language models (VLMs) build on autoregressive decoding of discrete tokens. While text-based output interfaces enable scalable pretraining and strong zero-shot generalization across diverse tasks, they are poorly suited for problems that require precise continuous outputs, such as localizing temporal boundaries of events or generating robotic control actions. To address this challenge, we propose DRIFT, a general framework for adapting pretrained VLMs to continuous decoding tasks.  DRIFT combines a base predictor, which provides a coarse estimate of the target output, with a generative refinement module based on flow matching that iteratively improves the prediction. This residual formulation transforms the generative modeling problem from learning a global output distribution to modeling a localized residual distribution around a strong prior, substantially simplifying optimization. We evaluate DRIFT on both perception and planning tasks, including visual grounding and robotic control. Across multiple tasks and architectures spanning MLLMs, VLAs, and WAMs, DRIFT consistently outperforms a strong set of regression- and generative-based solutions. 
\end{abstract}


\section{Introduction}
\label{sec:intro}


Vision-language models (VLMs)~\cite{liu2023visual, NEURIPS2022_960a172b} pretrained on large-scale visual and textual corpora have achieved remarkable success across perception and planning tasks, leading to the emergence of multimodal large language models (MLLMs)~\cite{bai2025qwen3, deitke2025molmo}, vision-language-action models (VLAs)~\cite{kim2024openvla,black2025pi0,intelligence2025pi}, and world action models (WAMs)~\cite{yuan2026fast, ye2026world}. Inspired by the success of language modeling, modern VLMs typically adopt a generic autoregressive backbone (\eg, Transformers~\cite{vaswani2017attention}) and decode discrete text tokens as their primary output interface. This design enables scalable pretraining via next token prediction, and supports flexible zero-shot transfer to downstream tasks whose outputs can be naturally expressed as text, including classification, captioning, and question answering.

However, many important perception and planning tasks fundamentally require predicting continuous quantities. Examples include estimating event timestamps in temporal video grounding (TVG)~\cite{sigurdsson2016hollywood, krishna2017dense} and generating control signals for robotic manipulation in VLA tasks~\cite{liu2023libero,li24simpler}. The discrete token interface is poorly suited to such tasks: tokenization introduces quantization errors, discards ordinal structure, and ignores fine-grained variation within bins. At the same time, replacing the discrete interface entirely is undesirable, as it would sacrifice the scalability of language-style pretraining and the generality that enables zero-shot transfer. In this work, we ask the following question: ``\textit{How can pretrained VLMs with discrete decoding interfaces be adapted for precise continuous prediction while preserving and leveraging the knowledge acquired during pretraining?}'' 

\begin{figure}[t!]
    \centering
    \includegraphics[width=0.95\linewidth]{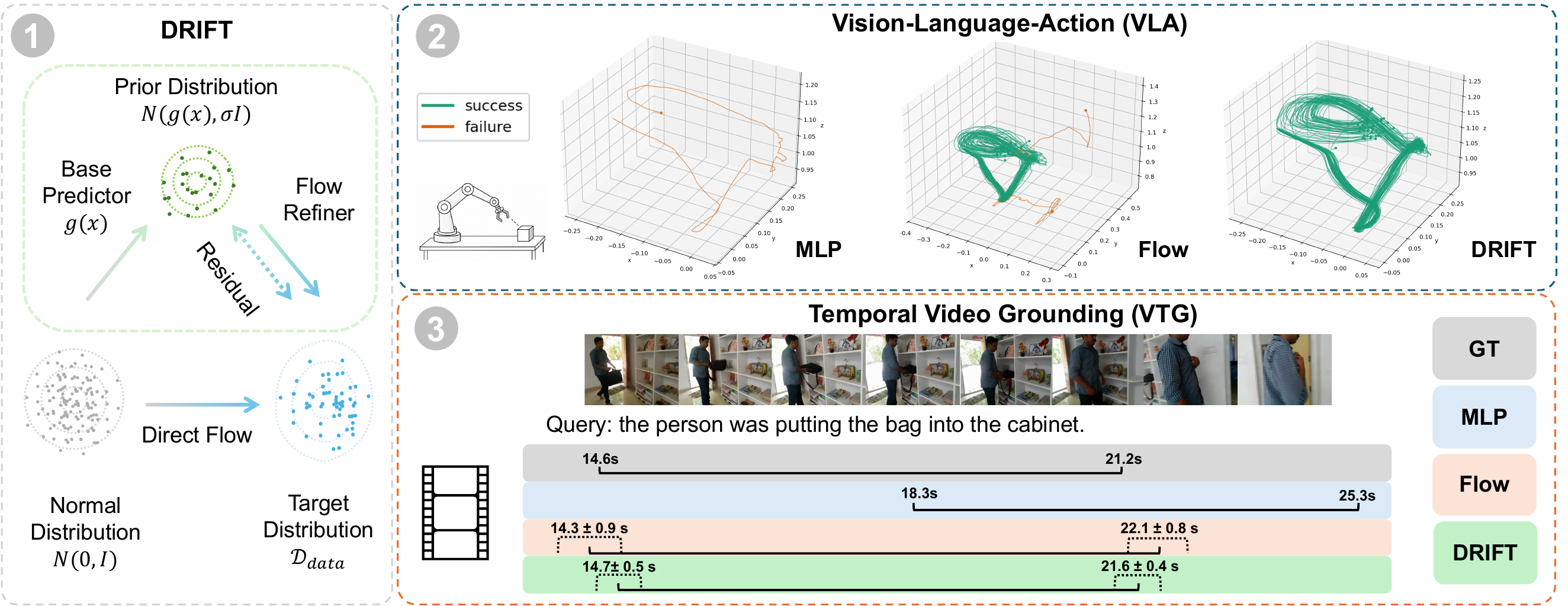}
    \vspace{-0.5em}
    \caption{\textbf{(1)}: \textbf{\ours} learns a velocity field that transports a starting distribution centered at an initial coarse prediction toward the target distribution, simplifying optimization by modeling localized residuals.
    \textbf{(2, 3)}: \textbf{Visualization} of predicted action trajectories (on Libero-Long) and temporal event boundaries (on Charades-STA) under different decoding strategies. Compared to MLP or flow matching, \ours produces more accurate and stable predictions with lower variance.
    }
    \label{fig:teaser}
    \vspace{-2em}
\end{figure}

Existing approaches only partially address this challenge. In MLLMs, prior work often discretizes continuous targets into tokens~\cite{kim2024openvla, bai2025qwen3} or appends special tokens with lightweight regression heads~\cite{zhang2026vlm4vla}. In VLAs and WAMs, diffusion- and flow-based action decoders have emerged as dominant approaches for generating continuous outputs~\cite{black2025pi0,intelligence2025pi,yuan2026fast}. Tokenization and direct regression are fundamentally deterministic and thus struggle to model uncertainty in continuous prediction, including multimodal output distributions~\cite{chi2025diffusion} and high-variance modes~\cite{moltisanti2017trespassing, zhang2025timelens} (Figure~\ref{fig:teaser}-2,3). Diffusion- and flow-based approaches alleviate this issue by iteratively refining predictions through generative modeling in continuous space. However, they typically require substantially more training data and optimization effort, can exhibit high prediction variance in low-data regimes (Figure~\ref{fig:teaser}-2,3), and must devote modeling capacity to global structure that can be readily captured by deterministic predictors.

\textit{Our key insight} is that deterministic prediction and iterative generative refinement can form a natural cascade. Deterministic predictors leverage pretrained representations to produce efficient coarse estimates, while generative refinement effectively captures uncertainty and recovers fine-grained structure around these estimates. 
Leveraging this insight, we propose \textbf{\ours} (\underline{D}ecoding via \underline{R}es\underline {i}dual \underline{F}low Adap\underline{t}er), a general framework for adapting pretrained VLMs to continuous decoding tasks. \ours introduces a lightweight residual flow adapter that augments a base predictor such as a token decoder or multilayer perceptron (MLP) regressor, with a flow-matching-based refinement module. 
Rather than generating predictions from scratch, \ours models the residual error distribution around an initial coarse prediction and progressively refines it using flow matching (Figure~\ref{fig:teaser}-1). 

This residual formulation transforms the generative modeling problem from learning a global output distribution to modeling a localized residual distribution around a strong prior, substantially simplifying optimization and improving learning efficiency.
Building on this formulation, we develop a general adaptation framework and effective training strategy applicable across diverse perception and planning tasks, provide theoretical analysis of its optimization advantages, and demonstrate strong empirical performance across a broad range of continuous decoding tasks.

We evaluate \ours on both perception and planning tasks. For perception, we study spatial visual grounding and temporal video grounding using MLLMs. For planning, we evaluate robotic control tasks using recent VLA and WAM architectures. Across all settings, \ours consistently improves over both direct regression baselines and standalone flow-matching decoders, while achieving state-of-the-art performance on several TVG and VLA benchmarks. For example, on TVG benchmarks, \ours achieves gains of +2.6\% on Charades-STA and +2.1\% on ActivityNet Captions compared to the latest methods~\cite{pramanick2025edvtg}. In VLA tasks, \ours built on Qwen3-VL-2B~\cite{bai2025qwen3} achieves 97.9\% average accuracy on Libero benchmarks and 61.5\% accuracy on Simpler WidowX.


\textbf{Our main contributions} are summarized as follows.
\begin{itemize}[nosep,leftmargin=10pt]
    \item We introduce \textbf{\ours, a general framework} for adapting pretrained vision-language models with discrete decoding interfaces to precise continuous prediction tasks.
    \item \textbf{Our key innovation} lies in a residual refinement adapter that combines efficient coarse prediction with flow-matching-based refinement, together with theoretical analysis of its optimization advantages and an effective training strategy. 
    \item Through extensive experiments across visual grounding and robotic manipulation benchmarks, we demonstrate \textbf{strong empirical results} across multiple VLM architectures.
\end{itemize}

\section{Related Work}
\label{sec:related}

\textbf{Vision Language Models (VLMs).}
VLMs learn from paired visual and textual data to address a broad range of vision-language tasks. In particular, multi-modal large language models (MLLMs)~\cite{liu2023visual, bai2025qwen3, deitke2025molmo, pmlr-v202-li23q, NEURIPS2022_960a172b} have emerged as powerful frameworks for learning transferable cross-modal representations from massive-scale image-text and video-text datasets, enabling sophisticated multimodal understanding and reasoning~\cite{fu2025video,yue2024mmmu,li2024mvbench}. Beyond passive perception, another line of VLM research focuses on action control, leading to vision-language-action (VLA) models~\cite{kim2024openvla, black2025pi0, intelligence2025pi}. These models are typically pretrained on large-scale action trajectories to improve cross-embodiment generalization and can be adapted to diverse downstream manipulation tasks with limited fine-tuning. More recently, researchers have begun to connect scene generation with action control, giving rise to world-action models (WAMs)~\cite{yuan2026fast, ye2026world}, which aim to model both environmental dynamics and action-conditioned interactions. 
Many of these models follow the design of large language models, using Transformer backbones to decode discrete tokens. 

Our framework, \ours, builds upon pretrained VLMs with discrete output interfaces and equips them with precise continuous decoding capabilities.

\smallskip
\textbf{Vision Tasks with Continuous Output.} 
Many perception and planning tasks require outputs that are inherently continuous rather than purely discrete. In this work, we focus on two representative tasks: (1) visual grounding~\cite{xiao2025towards}, which localizes the spatial or temporal extent of a visual concept in an image or video given a language query, and (2) VLA tasks~\cite{liu2023libero,li24simpler}, which predicts continuous robot action trajectories conditioned on visual observations and language instructions.
Prior work has explored several ways to adapt VLMs to such continuous-output tasks. For visual grounding, some methods~\cite{bai2025qwen3, clark2026molmo2, unitime, Sun_2025_ICCV,bai2025qwen3} discretize box coordinates or timestamps into tokens, while others~\cite{pramanick2025edvtg,mu2024snag,lin2023univtg} attach an MLP head to directly regress continuous values. For VLA tasks, existing approaches either use a simple MLP decoder to predict action trajectories from VLM representations~\cite{zhang2026vlm4vla}, or adopt diffusion- or flow-matching-based heads~\cite{black2025pi0, chen2025internvla, zitkovich2023rt} to improve action precision.

Unlike these task-specific decoding strategies, \ours provides an efficient and general adapter for extending pretrained VLMs to continuous-output tasks. By refining continuous predictions around a learned prior, our framework supports diverse perception and planning problems while preserving the reusable Transformer backbone of existing VLMs.

\smallskip
\noindent\textbf{Diffusion and Flow Matching.} Diffusion models~\cite{ho2020denoising, song2020score} seek to reverse a forward ``diffusion'' process that progressively injects noise into data by learning a denoising function. 
Flow matching models~\cite{lipman2023flow,liu2022flow} simplify the training of continuous-time normalizing flows as supervised regression over target vector fields, leading an objective closely connected to diffusion models.
While both frameworks were initially designed for generative modeling~\cite{rombach2022high, polyak2024movie}, recent research has expanded their utility to continuous value decoding in VLA~\cite{chi2025diffusion,black2025pi0,yang2025vlaser}. 

Our framework combines a base predictor with a flow-matching-based generative refinement module. We formulate this design as a general adaptation framework for VLMs and provide theoretical justification for its optimization advantages.

\section{\ours: Decoding Continuous Output via Residual Flow Adapter}
\label{sec:method}

\subsection{Preliminaries: Decoding Continuous Outputs in Autoregressive VLMs}\label{sec:method:background}
\noindent\textbf{Autoregressive VLMs}. An autoregressive VLM takes an image or video $\mathbf{X}^v$ and a text query $\mathbf{X}^q=\{x^q\}$ as its input, and generates an answer $\mathbf{X}^a=\{x^a\}$ in text form. 
Specifically, $\mathbf{X}^v$ is first encoded by a visual encoder $E_v(\cdot)$ (including a vision backbone and its projector) into a set of vision tokens $\{\mathbf{z}^v \in \mathbb{R}^d\}$. 
Similarly, $\mathbf{X}^q$ is processed by a text encoder $E_t(\cdot)$, which embeds the words $\{x^q\}$ into a set of text tokens $\{\mathbf{z}^q \in \mathbb{R}^d\}$.
These tokens are combined into $\{\mathbf{z}^{v|q}\} = [\{\mathbf{z}^v\}, \{\mathbf{z}^q\}]$, and processed by $f(\cdot)$ (often a LLM) that decodes the answer $\mathbf{X}^a$ autoregressively
\begin{equation}
    z_i^a = f\left( \left[ \{\mathbf{z}^{v|q}\}, \{\mathbf{z}^a_{<i}\} \right]; \mathbf{\theta}_f \right), \quad x_i^a = D_t(z_i^a), \label{eq:llm}
\end{equation}
where $D_t(\cdot)$ is the decoding function that converts the embedding $z_i^a$ to the answer token $x_i^a$ in text, and $\{\mathbf{z}^a_{<i}\}$ are text tokens from previously generated answer $x^a_{<i}$, \ie $\mathbf{z}^a = E_t(x^a)$.

\noindent\textbf{Decoding Continuous Outputs}. For many vision tasks, $\mathbf{X}^a$ involves a continuous output. For simplicity, we consider a single continuous output $y_i=x_i^a \in \mathbb{R}^c$ at slot $i$ with dimension $c$, though the formulation can be easily extended to multiple variables. Without loss of clarity, we drop the subscript $i$ and the superscript $a$ hereafter. 
To decode a continuous $y$, a separate decoding function $g(z) \rightarrow y$ is often needed.  
When \textit{tokenization} is applied to $y$, $g$ can re-use the discrete decoder $D_t$.   
In \textit{direct regression}, a special token $x^{s}$ representing the continuous output is introduced, and $g$ is realized as either a simple MLP. The decoder $g$ is activated only when $x^{s}$ is generated.
When using \textit{diffusion or flow match} heads, a noise or time dependent version of $g$ will be learned to model the conditional probability $p(y|z)$, again, with the help of special tokens $x^{s}$. 

This continuous decoding function $g$ can be provided as part of the modern VLMs, often in the form of tokenization or a simple MLP. For example, Qwen3-VL~\cite{bai2025qwen3} have incorporated visual grounding into pre-training by representing numerical quantities as discrete text tokens.  When $g$ is not presented, we implement $g$ with an MLP that is joint learned with \ours.

\noindent\textbf{Flow Matching}. Flow matching~\cite{lipman2023flow} or diffusion~\cite{chi2025diffusion} heads can be used for continuous decoding. Their formulations differ slightly. Here we provide a brief introduction to flow matching since it is most relevant to our method. Conditional flow matching learns a velocity field $\vfield(y(t), t, z)\triangleq dy(t)/dt$, typically a neural network with parameters $\theta$, that transports samples from a known noise distribution to a target distribution $p(y|z)$. 
With target $y(t=1) \in \R^c$, noise $y(t=0) \in \R^c \sim \mathcal{N}(\bm 0, \bm I)$, and time $t \sim \mathcal{U}(0,1)$, the linear interpolant $y(t) = (1{-}t)\, y(t=0) + t\,y(t=1)$ has ground-truth velocity $v^* = y(t=1) - y(t=0)$.
The standard flow-matching loss is
\begin{equation}
  \Lfm = \E\Big[\|\vfield(y(t), t, z) - v^* \|_2^2\Big].
  \label{eq:fm:loss}
\end{equation}
Training minimizes \cref{eq:fm:loss} with respect to the parameters $\theta$ in the veolocity field $v_\theta(\cdot)$. Inference predicts $y(t=1)$ by solving the ODE $dy/dt = \vfield(y(t), t, z)$ given $y(t=0)$ and $z$.

\subsection{Design of \ours}\label{sec:method:overview}

\begin{figure}[t!]
    \centering
    \includegraphics[width=0.98\linewidth]{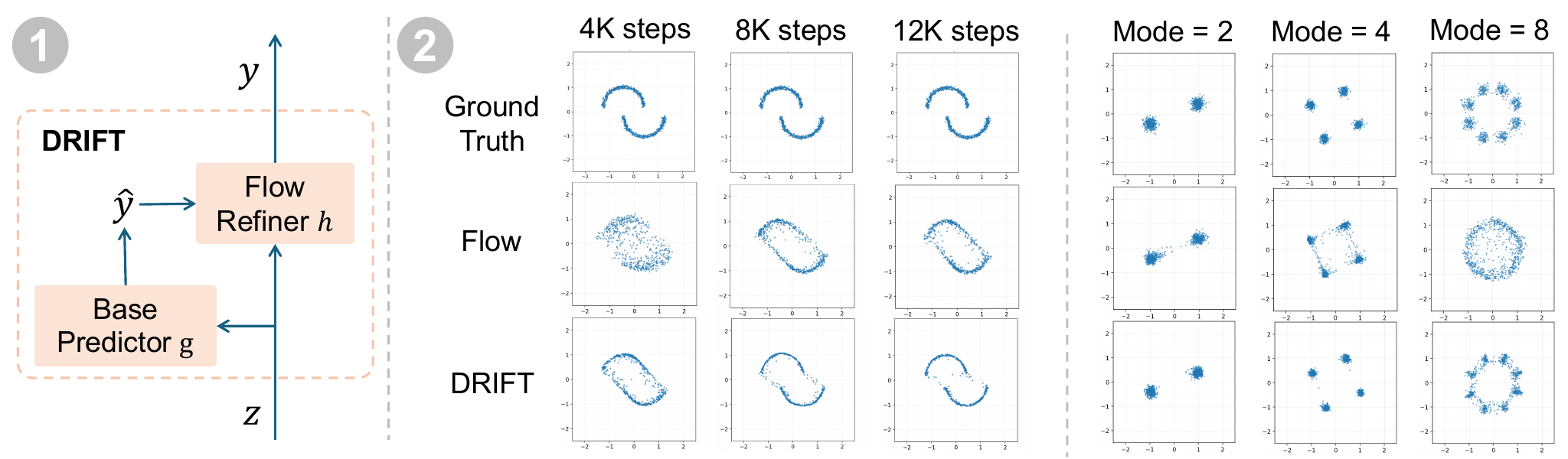}
    \vspace{-0.6em}
    \caption{\textbf{(1)}: \textbf{Overview of \ours}, which combines a base predictor with a generative refinement module for continuous decoding conditioned on VLM embedding $z$.
    \textbf{(2)}: \textbf{Toy experiments} on a synthetic 2D dataset projected into a high-dimensional space with a fixed, random, column-orthogonal projection matrix, following~\cite{li2025back}. A 3-layer MLP is trained to model the velocity field in the projected space. The projection matrix is unknown to the model, and only used for visualizing the output.
    %
    \textbf{(2-Left)}: \ours achieves faster convergence than flow matching. 
    \textbf{(2-Right)}: As the underlying distribution becomes increasingly multimodal, \ours more accurately recovers the target distribution.}
    \label{fig:method_and_toy}
    \vspace{-1.5em}
\end{figure}

\textbf{Problem Formulation}.
Our goal is to decode continuous output $y \in \mathbb{R}^c$ from VLM embedding $z$ without modifying existing VLM architecture. 
Our key intuition is that a simple base predictor $g$, built on the learned VLM representation, can provide a strong initial estimation and such estimation can be further refined by a generative model. 

To this end, we propose learning a velocity field $v_\theta(y(t), t, z)$ that transports a Gaussian distribution $\mathcal{N}(\hat{y} \triangleq g(z), \sigma \bm I)$ centered at the coarse prediction $\hat{y} \triangleq g(z)$, towards the target conditional distribution $p(y|z)$. 
Here $\sigma \in \mathbb{R}^c$ scales the variance of the Gaussian and is learned from data. In contrast to standard flow matching, which transports $\mathcal{N}(\bm 0, \bm I)$ to $p(y|z)$, our approach, illustrated in Figure~\ref{fig:method_and_toy}-1, initializes starting distribution using the coarse prediction $\hat{y}$, offering a strong prior over $y$

\textbf{Parameterization}. 
We parameterize the velocity field as
\begin{equation}
    v_\theta(y(t), t, z) = \frac{y_\theta(y(t), t, z) - y(t)}{1-t}, 
    \label{eq:fm:v}
\end{equation}
where $y_\theta(\cdot)$ aims to directly predict the target output $y(t=1)$. This parameterization has been recently discussed for generation tasks and shows competitive performance to velocity prediction~\cite{li2025back}.

Predicting the target $y$ allows us to add \textit{a skip connection} to the base predictor $g$. This is given by
\begin{equation}
    y_\theta(y(t), t, z) = \alpha \ g(z) + (1-\alpha) \ h_\theta(y(t), t, z).\label{eq:skip}
\end{equation}
$h_\theta(\cdot)$ is time-dependent and can be considered as the \textit{refinement module}. It iteratively refines the residual between $y$ and the coarse prediction $\hat{y}$. $\alpha$ is a learnable gating vector that selectively combines the outputs of the base predictor $g$ and the refinement module $h_\theta(\cdot)$.

\textbf{Learning Target}. We consider the following linear interpolant, also referred to as the \textit{bridge state}, 
\begin{equation}
    y(t) = (1-t)( \sigma\epsilon + g(z) ) + ty,
    \label{eq:interpolant}
\end{equation}
where $\epsilon \sim \mathcal{N}(\bm 0, \bm I)$. The target velocity is given by $v^* = d y(t) / dt = y-\sigma\epsilon-\hat y$. We follow the standard flow matching loss in Eq.~\ref{eq:fm:loss}. We sample $t \in \mathcal{U}(0,1)$, construct $y(t)$ by mixing a noise $\epsilon$ with a target output $y$, compute its velocity at $t$ with the corresponding condition $z$ following Eq. \ref{eq:fm:v}, and then minimizes the mean squared error (MSE) between the predicted velocity and the target flow $v^*$. Our learnable parameters include: (1) $\theta$ in $h_\theta(\cdot)$, which models the residual error; (2) the gating vector $\alpha$, which balances the coarse prediction and its refinement; and (3) the scaling vector $\sigma$, which controls the variance of the initial Gaussian distribution.

It is worth noting that an alterative linear interpolant is $y(t) = (1-t)\sigma\epsilon + t(y-g(z))$. When the based predictor $g(\cdot)$ is fixed, these two interpolants are mathematically equivalent. However, the equivalence no longer holds when $g(\cdot)$ is jointly learned. In that case, they present two conceptually different ways of modeling. Eq.~\ref{eq:interpolant} first ``shifts'' the Gaussian distribution to $\sigma\epsilon+g(z)$ and then model the transport from the shifted distribution to the target distribution. In contrast, the alternative interpolant here direct models the gaps between the initial Gaussian and the distribution of the residual error. Practically, Eq.~\ref{eq:interpolant} offers greater flexibility in modeling. For example, it allows the skip connection in our refinement module (Eq.~\ref{eq:skip}). Empirically, we validate that Eq.~\ref{eq:interpolant} provides minor performance benefits in our ablation studies (see Section~\ref{sec:exp:ablation_study}).

\textbf{Joint Training with the Base Predictor}. When the base predictor $g$ is already provided as part of the VLM, we either keep its parameters frozen or update them, 
based on the setting. 
When such a predictor is unavailable, we instantiate $g$ as an MLP regressor, supervise it with an additional MSE loss $\mathcal{L}_{p}$ (with the overall loss as $\Lfm + \mathcal{L}_{p}$), and jointly optimize its parameters. A key challenge in this joint optimization setting is that the model can collapse to relying primarily on the flow matching model while ignoring the coarse predictor $g$. We find that stable learning requires applying gradient stopping to $g(\cdot)$ when computing the linear interpolant using Eq.~\ref{eq:interpolant}.

\textbf{Inference}. At inference time, we sample Gaussian noise $\epsilon \sim \mathcal{N}(\bm 0, \bm I)$ and initialize the trajectory as $y(0)=g(z)+\sigma \epsilon$. We then solve the ODE $dy(t)/dt = v_\theta(y(t), t, z)$ to obtain the final prediction $y(1)$. 
Since our model is inherently generative and ODE integration typically requires only a small number of steps (\eg 10-20), when needed, we can efficiently draw multiple samples and average their predictions, which reduces prediction bias and often leads to slightly more accurate estimates. 

\subsection{Toy Experiments and Theoretical Justification}
We present results of \ours and a vanilla flow matching model on several toy examples in Figure~\ref{fig:method_and_toy}-2. \ours provides a more faithful characterization of the underlying data distribution by effectively disentangling its modes.

In addition, we provide a detailed theoretical analysis in Appendix~\ref{sec:supp:theory}, assuming a fixed base predictor $g(z)$. In summary, residual refinement strictly reduces prediction error compared with using the base predictor $g(z)$ alone, as long as the bridge state $y(t)$ carries information about the residual $r = y - \hat y$ beyond what $g(z)$ already captures. Intuitively, $y(t)$ is a noisy interpolation between the initial prediction $\hat y$ and the true target $y$, so it inherently contains a partial target signal that the refinement module $h(y(t), t, z)$ can exploit to better approximate $r$. Compared with applying flow matching directly to $y$, when the base predictor already captures the condition-dependent mean structure, both the residual signal $r$ and the residual velocity field have smaller second moments, and the bridge is confined to a narrower tube around the prior. The refinement module therefore only needs to learn a local correction rather than transport noise to the full target, which accounts for the faster convergence and improved accuracy of DRIFT.
\section{Experiments and Results}
\label{sec:experiments}
We evaluate \ours on VLA (Section~\ref{sec:exp:vla}) and TVG (Section~\ref{sec:exp:tvg}) benchmarks. We further demonstrate the generality of \ours in Section~\ref{sec:exp:further_analysis}, including its extension to 2D spatial grounding and its integration with a world action model. Finally, we present ablation studies in Section~\ref{sec:exp:ablation_study}.


\subsection{Results of Vision-Language-Action Tasks}
\label{sec:exp:vla}

\begin{table*}[t!]  
\centering  
\scalebox{0.75}{
\begin{tabular}{l|c|c|ccccc|c}  
\toprule
\multirow{2}{*}{Method}  & \multirow{2}{*}{Base Model} & \multirow{2}{*}{Action Decoder} & \multicolumn{5}{c|}{Libero~\cite{liu2023libero}$\uparrow$} & \multirow{2}{*}{Simpler~\cite{li24simpler}$\uparrow$}  \\
                         &                &                       & Spatial    & Object & Goal     & Long & AVG   \\ 
    \midrule
    \multicolumn{6}{l}{\it \small\textbf{Methods with VLA pretraining}} \\
    \rowcolor{colorB}
     OpenVLA*~\cite{kim2024openvla}      & Prismatic-7B~\cite{karamcheti2024prismatic}  &  Tokenizer &  83.1 & 88.5  & 76.4 & 54.1 & 75.5 & 7.7  \\
     $\pi_0$~\cite{black2025pi0}         & PaliGemma-3B~\cite{beyer2024paligemma} & Flow &96.8 & 98.8 & 95.8 & 85.2 & 94.1 & 27.1 \\
     $\pi_{0.5}$~\cite{intelligence2025pi} & PaliGemma-3B~\cite{beyer2024paligemma} & Flow & 98.8 & 98.2 & \textbf{98.0} & 92.4 & 96.9 & 57.1 \\
     
    \multicolumn{6}{l}{\it \small\textbf{Methods based on Adaptation}} \\
    VLA-Adaptor~\cite{wang2025vla}  & Qwen-2.5-0.5B~\cite{yang2024qwen2}  &  Diffusion & 97.8 & 99.2 & 97.2 & 95.0 & 97.3 &  N/A  \\
    \rowcolor{colorA}
    VLM4VLA~\cite{zhang2026vlm4vla}  & Qwen3-VL-2B~\cite{bai2025qwen3}    & MLP & N/A & N/A & N/A & 55.8 & N/A & 49.0 \\
    
     VLM4VLA~\cite{zhang2026vlm4vla}  & InternVL3.5-4B~\cite{wang2025internvl3} & MLP & N/A & N/A & N/A & 62.8 & N/A & 57.3 \\
    InternVLA~\cite{chen2025internvla}  & Qwen2.5-VL-3B~\cite{bai2025qwen25vltechnicalreport}  &  Diffusion & 98.0	& 99.0	& 93.8  & 92.6 & 95.8 & 54.2 \\
     
    \midrule
    \rowcolor{colorB}
    \ours                 & OpenVLA-7B~\cite{kim2024openvla} & \ours-Tokenizer & 87.0 & 89.1 & 77.5 & 57.0 & 77.7 & 13.5 \\
    \rowcolor{colorA}
    \ours                 & Qwen3-VL-2B~\cite{bai2025qwen3}   & \ours-MLP & \textbf{98.8} & \textbf{99.2} & 97.0 & \textbf{96.4} & \textbf{97.9} & \textbf{61.5}  \\
    
     \bottomrule
\end{tabular}
}
\caption{\textbf{Results on VLA Benchmarks.} We report action success rates, where * indicates reproduced results. \ours-MLP and \ours-Tokenizer use MLP and tokenization-based base predictors, respectively. Across Libero and Simpler benchmarks, \ours consistently improves upon OpenVLA and achieves state-of-the-art performance with the Qwen3-VL-2B backbone.}
\vspace{-1.5em}
\label{tab:vla_main}
\end{table*}

VLA tasks aim to predict robot actions conditioned on visual inputs and language instructions. Given an observation of the environment and a task description, a VLA model generates continuous control signals for robotic manipulation. The predicted actions are executed by the robot, after which new observations are collected and fed back into the model, resulting in a closed-loop control process. 

\textbf{Experiment Design}. We instantiate \ours individually with Qwen3-VL-2B-Instruct~\cite{bai2025qwen3} and OpenVLA~\cite{kim2024openvla} as VLM backbones. 
Qwen3-VL is a recent MLLM pretrained on large-scale multimodal corpora, but it does not provide a dedicated base predictor for VLA tasks. We use this model to study how large-scale pretraining benefits continuous action decoding and to evaluate the joint optimization of the base predictor and flow-matching-based refinement.
OpenVLA, in contrast, is a VLA model that predicts actions through tokenization. We include this model to demonstrate how \ours can refine predictions on top of an existing base predictor.

\textbf{Experiment Protocol}. We evaluate \ours on Libero-Spatial, Libero-Object, Libero-Goal, Libero-Long~\cite{liu2023libero}, and Simpler~\cite{li24simpler} with WidowX robot arm. Both Libero and Simpler are simulation based benchmarks, with a focus on tabletop object manipulation tasks. 
We train \ours on the LIBERO training set for LIBERO experiments, and on the Bridge and Fractal splits of OXE~\cite{o2024open} when evaluating the policy on Simpler with WidowX.
On Libero, we run 50 trials for each of the 10 subtasks in every suite. On Simpler WidowX, we run 24 trials for each of 4 subtasks. We report the action success rate as our evaluation metric.


\textbf{Implementation Details.} We attach LoRA~\cite{hu2022lora} adapters to the backbone VLM, including the vision encoder and the LLM, and jointly train \ours with the base predictor. 
Appendix~\ref{appx:vla_implementation} provides full implementation details and hyperparameters.

\textbf{Baselines}. We compare against two groups of baselines: (1) models with large-scale robotic pretraining, including OpenVLA~\cite{kim2024openvla}, $\pi_0$~\cite{black2025pi0}, and $\pi_{0.5}$~\cite{intelligence2025pi} and (2) methods based on adaptation such as VLA-Adaptor~\cite{wang2025vla}, VLM4VLA~\cite{zhang2026vlm4vla}, and InternVLA~\cite{chen2025internvla}.

\textbf{Results}. The results are presented in Table~\ref{tab:vla_main}. 
With OpenVLA as the base model, \ours achieves consistent improvement across different sub-tasks on Libero and Simpler. Particularly, \ours raises the Libero average from 75.5\% to 77.7\% and Simpler from 7.7\% to 13.5\%.  With Qwen3-VL-2B, \ours reaches 97.9\% average success rate on Libero and 61.5\% on Simpler, achieving new state-of-the-art performance on both benchmarks. We outperform the VLM4VLA with Qwen3-VL-2B by a clear margin, demonstrating the effectiveness of \ours. 
Yet, we see a major gap between \ours initialized with OpenVLA and Qwen3-VL, which reveals that the pretraining of the VLM has a strong effect on the downstream task. 
The general improvement in the vision understanding of Qwen3-VL provides strong action representation helpful for \ours, and thus yields better results.


\subsection{Results of Temporal Video Grounding}
\label{sec:exp:tvg}
TVG aims to localize the temporal segment in a video that corresponds to a natural language query. Given a video and a text description, the model predicts continuous start and end timestamps indicating when the described event occurs.

\begin{table*}[t!]
\centering
\scalebox{0.85}{
\begin{tabular}{l|c|cccc|cccc}
\toprule
\multirow{2}{*}{Method} & \multirow{2}{*}{\# Train} & \multicolumn{4}{c|}{Charades-STA~\cite{gao2017charades}$\uparrow$} & \multicolumn{4}{c}{ActivityNet-Captions~\cite{krishna2017dense}$\uparrow$} \\
                        &                            & R@0.3 & R@0.5 & R@0.7 & mIoU & R@0.3 & R@0.5 & R@0.7 & mIoU \\
\midrule
\multicolumn{10}{l}{\it \small\textbf{Non-generalist (Specialist) Models}} \\
UniVTG~\cite{lin2023univtg}                       & 4.2M  & 44.1 & 25.2 & 10.0 & 27.1 & --   & --   & --   & --   \\
SeViLA~\cite{yu2023sevila}                        & 129M  & --   & --   & --   & --   & 31.6 & 19.0 & 10.1 & 23.0 \\
PSVL~\cite{nam2021psvl}                           & --    & 46.2 & 31.3 & 14.2 & 31.2 & 44.7 & 30.1 & 14.7 & 29.6 \\
LT-ZVG~\cite{ltzvg}                       & --    & 52.9 & 37.2 & 19.3 & 36.0 & 47.6 & 32.6 & \underline{15.4} & 31.8 \\
\midrule
\multicolumn{10}{l}{\it \small\textbf{Generalist (LLM-based) Models}} \\
Video-LLaMA~\cite{zhang2023videollama}            & 2.7M  & 25.2 & 10.6 & 3.4  & 16.8 & 21.9 & 10.8 & 4.9  & 16.5 \\
Video-ChatGPT~\cite{maaz2024videochatgpt}         & 100K  & 27.2 & 6.2  & 1.9  & 19.7 & 19.5 & 10.6 & 4.8  & 14.2 \\
Valley~\cite{luo2023valley}                       & 100K  & 28.4 & 1.8  & 0.3  & 21.4 & 30.6 & 13.7 & 8.1  & 21.9 \\
VideoChat2~\cite{li2024mvbench}                   & 2M    & 38.0 & 14.3 & 3.8  & 24.6 & 40.8 & 27.8 & 9.3  & 27.9 \\
Momentor~\cite{qian2024momentor}                  & 10M   & 42.6 & 26.6 & 11.6 & 28.5 & 42.9 & 23.0 & 12.4 & 29.3 \\
VTimeLLM~\cite{huang2024vtimellm}                 & 170K  & 51.0 & 27.5 & 11.4 & 31.2 & 44.0 & 27.8 & 14.3 & 30.4 \\
TimeChat~\cite{ren2024timechat}                   & 125K  & --   & 32.2 & 13.4 & --   & --   & --   & --   & --   \\
HawkEye~\cite{wang2024hawkeye}                    & 715K  & 50.6 & 31.4 & 14.5 & 33.7 & 49.1 & 29.3 & 10.7 & 32.7 \\
ChatVTG~\cite{qu2024chatvtg}                      & 100K  & 52.7 & 33.0 & 15.9 & 34.9 & 40.7 & 22.5 & 9.4  & 27.2 \\
ED-VTG~\cite{pramanick2025edvtg}                  & 136K  & 59.5 & 39.3 & 19.8 & 40.2 & \underline{52.1} & \underline{33.1} & \textbf{16.0} & \underline{35.2} \\
\rowcolor{colorA} ET-Chat~\cite{liu2024etbench}                      & 164K  & \underline{65.7} & \textbf{45.9} & \underline{20.0} & \underline{42.3} & 24.1 & 12.7 & 6.2  & 18.9 \\
\midrule
\rowcolor{colorA} \ours (ET-Chat)                                   & 100K  & \textbf{67.2} & \underline{44.8} & \textbf{20.6} & \textbf{43.8} & \textbf{57.9} & \textbf{35.8} & 14.1 & \textbf{37.3} \\
\bottomrule
\end{tabular}
}\vspace{-0.3em}
\caption{\textbf{Results on Temporal Video Grounding.} We evaluate the zero-shot generalization capability of \ours on the Charades-STA~\cite{gao2017charades} and ActivityNet Captions~\cite{krishna2017dense} test splits, and compare against existing baselines. 
We report Recall@1 at IoU thresholds $\in {0.3, 0.5, 0.7}$ together with mean IoU (mIoU). 
Baseline results are taken from ED-VTG~\cite{pramanick2025edvtg}, except for ET-Chat, whose results are reproduced using the released checkpoint under the same evaluation protocol. 
``\# Train'' denotes the size of the temporal grounding adaptation corpus. \ours outperforms ED-VTG, the strongest generalist baseline, on both datasets, with the largest gains observed on R@1@0.3 and mIoU metrics.}
\vspace{-1.5em}
\label{tab:tvg_zs}
\end{table*}

\textbf{Experiment Design}. We adopt \textbf{ET-Chat}~\cite{liu2024etbench} as the VLM backbone, following the ED-VTG~\cite{pramanick2025edvtg} protocol for fair comparison with prior generalist baselines. ET-Chat natively localizes events by cosine-matching a generated $\langle\text{vid}\rangle$ token's hidden state to frame embeddings; we replace this matching head with an MLP that regresses $(t_\text{start}, t_\text{end})$ from the same hidden state, providing the base prediction that \ours refines. The MLP, flow refiner, and LoRA adapters are jointly trained on TimeLens-100K~\cite{zhang2025timelens}; Charades-STA and ActivityNet-Captions training splits are unseen, so all results are cross-dataset zero-shot.

\textbf{Experiment Protocol}. We follow the protocol used by ED-VTG~\cite{pramanick2025edvtg}, whose standard splits enable direct comparison with the broader TVG literature. We train on TimeLens-100K~\cite{zhang2025timelens}. We evaluate on the Charades-STA~\cite{gao2017charades} and ActivityNet-Captions~\cite{krishna2017dense} test splits, and report Recall@1 at IoU thresholds $\tau \in \{0.3, 0.5, 0.7\}$ (R1@$\tau$) and mean IoU (mIoU).

\textbf{Implementation Details.} 
Our implementation follows ET-Chat's native video pipeline at 1~FPS, freezes the visual encoder and frame compressor, LoRA-tunes the LLM attention projections, and trains the residual-flow head jointly with the adapters. Further implementation details and hyperparameters can be found in Appendix~\ref{appx:timelens100k}. 

\textbf{Baselines.} We report results for a strong set of generalist zero-shot baselines on Charades-STA and ActivityNet Captions, while additionally including specialist models for reference. The most relevant comparison is the vanilla ET-Chat baseline evaluated under the same protocol, since \ours builds upon ET-Chat as its backbone. Among all baselines, ED-VTG achieves the best performance.

\textbf{Results.} Table~\ref{tab:tvg_zs} reports the VTG results. \ours (ET-Chat) outperforms ED-VTG, the strongest ED-VTG-reported generalist baseline, on 7 of 8 IoU/mIoU columns; the only exception is ActivityNet-Captions R@0.7, where \ours trails ED-VTG by 1.9\%. On Charades-STA, where vanilla ET-Chat already exceeds ED-VTG by 6.2\% at R@0.3, \ours adds $+1.5$\% R@0.3 ($65.7\% \to 67.2\%$) and $+1.5$\% mIoU ($42.3\% \to 43.8$\%), with R@0.5 within $\pm 1$\% of ET-Chat. On ActivityNet-Captions, vanilla ET-Chat is weaker (R@0.3 24.1\%, mIoU 18.9\%); \ours lifts these to 57.9\% and 37.3\% respectively, surpassing ED-VTG on R@0.3, R@0.5, and mIoU.

\subsection{Further Demontration}
\label{sec:exp:further_analysis}

\begin{table*}[t]
\centering
\begin{minipage}[t]{0.48\textwidth}
    \centering
    \scalebox{0.7}{
    \begin{tabular}{l|cccc}  
    \toprule
        Method & refcoco & refcoco+ & refcocog   & AVG  \\
        \midrule
         Qwen-VL~\cite{bai2025qwen3}  & 89.6 & 81.8 & 85.5 & 85.6 \\
         Qwen-VL (w/ \ours)  & \textbf{91.7} & \textbf{85.8} & \textbf{87.9} & \textbf{88.5} \\
         \bottomrule
    \end{tabular}
    }
    \caption{\textbf{Results on Spatial Grounding.} Integrating \ours into Qwen3-VL-2B improves performance on RefCOCO benchmarks.}
    \vspace{-1.5mm}
    \label{tab:spatial_grounding}
\end{minipage}
\hfill 
\begin{minipage}[t]{0.48\textwidth}
    \centering
    \scalebox{0.7}{
    \begin{tabular}{l|ccccc}  
    \toprule
        Method & Spatial & Object & Goal  & Long & AVG  \\
        \midrule
         FastWAM~\cite{yuan2026fast}  & 97.6 &	99.4 &	95.6 &	95.0& 96.9                                      \\
         FastWAM (w/ \ours)  & \textbf{99.2}	& 99.0	& \textbf{98.2}	& \textbf{96.2}	& \textbf{98.2} \\
         \bottomrule
    \end{tabular}
    }
    \caption{\textbf{Results with World Action Models.} Integrating \ours with FastWAM improves performance on the Libero benchmarks.}
    \vspace{-1.5mm}
    \label{tab:fast_wan}
\end{minipage}
\end{table*}

\begin{figure}[t!]
    \centering
    \includegraphics[width=1.0\linewidth]{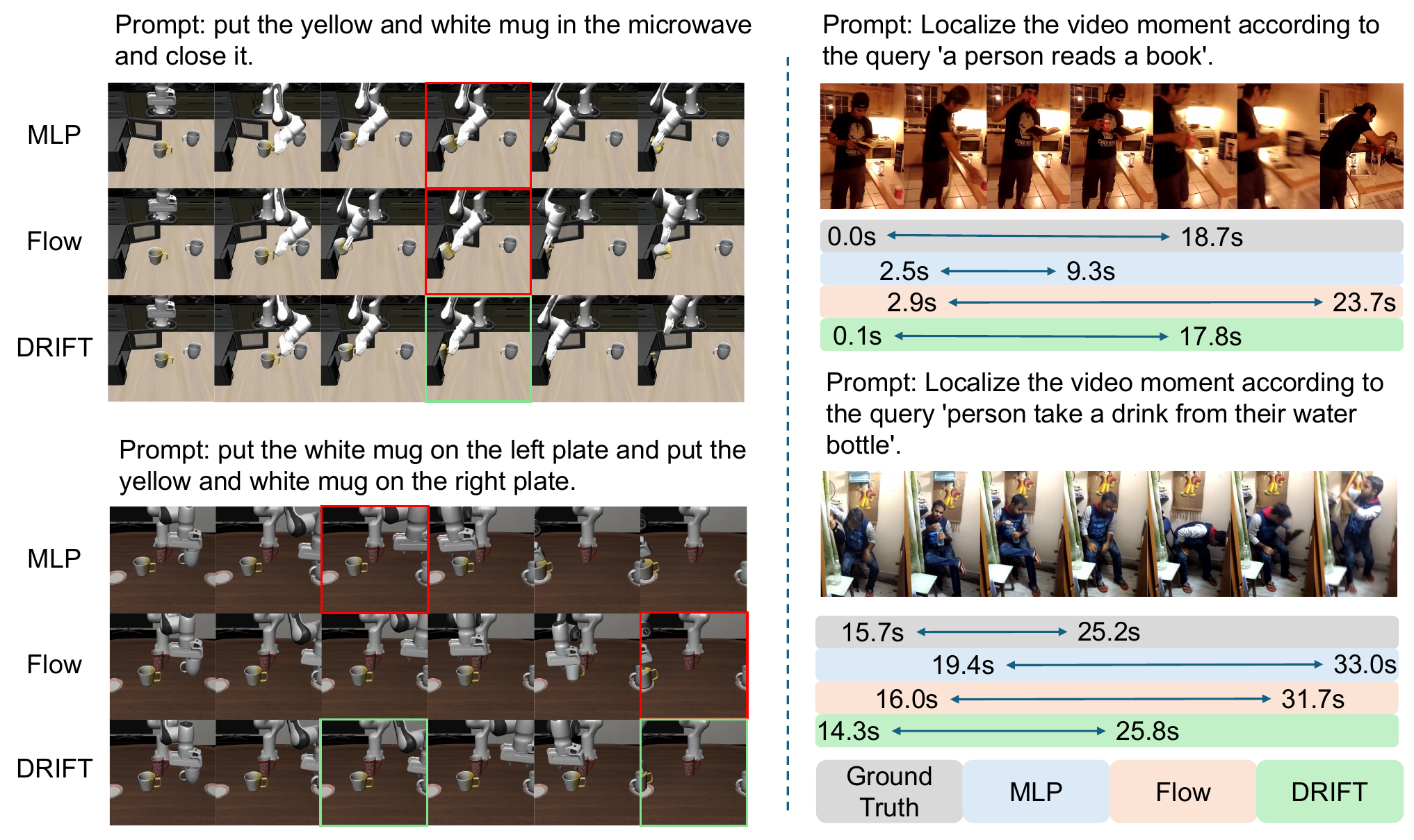}
    \vspace{-1.5em}
    \caption{\textbf{Qualitative Results.} \textbf{Left:} Integrating a base predictor and a flow-based refiner enables \ours to generate more accurate and stable action trajectories, improving action precision and overall task success rates. \textbf{Right:} Through iterative refinement of the coarse prediction, \ours produces sharper and more accurate event temporal boundaries.}
    \label{fig:vla_visualization}
    \vspace{-1.5em}
\end{figure}

\textbf{2D Spatial Grounding}. We further adapt \ours to 2D spatial grounding tasks. Specifically, we attach \ours to the Qwen3-VL-2B model. We then train and evaluate the model on the RefCOCO series datasets~\cite{kazemzadeh2014referitgame, mao2016generation, yu2016modeling}. As shown in Table~\ref{tab:spatial_grounding}, \ours consistently improves the performance of Qwen3-VL by a clear margin across all benchmarks, demonstrating the versatility and generality of our framework. Further details are provided in the Appendix~\ref{appx:spatial_grounding_impl}.



\textbf{World Action Model}. We further explore adapting \ours to recent world action models. Specifically, we integrate \ours with FastWAM~\cite{yuan2026fast}. Unlike the other models considered in this paper, FastWAM already employs a flow matching based decoder for predicting continuous actions. In this setting, we introduce an additional base predictor using MLP and jointly optimize it together with FastWAM’s original flow decoder. 
We train the model following FastWAM's training recipe, and present the results in Table~\ref{tab:fast_wan}. \ours consistently improves FastWAM on the Libero benchmark suite, further demonstrating the generality of our framework. Additional details are provided in the Appendix~\ref{appx:fastwam_implementation_details}.



\textbf{Qualitative Results}. 
Figure~\ref{fig:vla_visualization}-left visualizes actions generated by different decoders on Libero-Long tasks. In VLA settings, even small variations in the predicted action can lead to task failure. For example, the grasping angle of the cup can prevent successful placement into the microwave (top example), while slight deviations in the release position can cause failures when placing the cup onto the plate (bottom example). By progressively refining the coarse prediction from the base predictor, \ours generates more accurate actions and consequently improves task success rates.

Similarly, Figure~\ref{fig:vla_visualization}-right compares predictions from different decoders on TVG tasks. Exact temporal boundaries are often ambiguous. Through iterative refinement, \ours produces sharper temporal boundaries, leading to improved localization accuracy and overall performance.


\subsection{Ablation Study}
\label{sec:exp:ablation_study}
\begin{table*}[t]
\centering
\scriptsize
\setlength{\tabcolsep}{2pt}
\renewcommand{\arraystretch}{0.88}
\begin{minipage}[t]{0.22\textwidth}
    \centering
    \textbf{(a) Decoder}\\[0.4ex]
    {
    \renewcommand{\arraystretch}{1.08}
    \resizebox{\linewidth}{!}{
    \begin{tabular}{lc}
    \toprule
    Decoder & LL \\
    \midrule
    MLP  & 88.2 $\pm$ 0 \\
    Diffusion  & 92.23 $\pm$ 0.64 \\
    Flow  & 93.67 $\pm$ 0.35 \\
    \ours  & \textbf{96.33 $\pm$ 0.23} \\
    \bottomrule
    \end{tabular}
    }
    }
\end{minipage}
\hfill
\begin{minipage}[t]{0.26\textwidth}
    \centering
    \textbf{(b) Co-training Effect}\\[0.4ex]
    \resizebox{0.78\linewidth}{!}{
    \begin{tabular}{lc}
    \toprule
    Decoder & LL \\
    \midrule
    MLP & 88.2 \\
    MLP as base & 93.0 \\
    \ours & \textbf{96.4} \\
    \bottomrule
    \end{tabular}
    }
\end{minipage}
\hfill
\begin{minipage}[t]{0.47\textwidth}
    \centering
    \textbf{(c) \ours design}\\[0.4ex]
    \resizebox{\linewidth}{!}{
    \begin{tabular}{ll|cc|cc}
    \toprule
                        & & \multicolumn{2}{c|}{\textbf{Direct Flow}} & \multicolumn{2}{c}{\textbf{\ours}} \\ 
    \cmidrule{3-6}
                        & & \textbf{v-pred} & \textbf{x-pred} & \textbf{v-pred} & \textbf{x-pred} \\ 
    \midrule
    \multirow{2}{*}{\textbf{w/ skip}}  
        & \textbf{w/ $\mathcal{L}_{p}$}  & N/A & N/A & N/A & \textbf{93.8} \\
        & \textbf{w/o $\mathcal{L}_{p}$} & N/A & N/A & N/A & \textbf{93.6} \\ 
    \midrule
    \multirow{2}{*}{\textbf{w/o skip}} 
        & \textbf{w/ $\mathcal{L}_{p}$}  & 92.4 & 91.2 & 91.0 & 92.2 \\
        & \textbf{w/o $\mathcal{L}_{p}$} & 91.0 & 91.2 & 93.2 & 93.4 \\
    \bottomrule
    \end{tabular}
    }
\end{minipage}
\caption{\textbf{Ablation Studies on Libero-Long.} (a) \ours outperforms MLP, diffusion, and flow-based decoders. (b) Joint training improves the action representations. (c) The best-performing configuration uses \ours with $x$-prediction and a skip connection.}
\label{tab:libero_long_ablations}
\vspace{-1.5em}
\end{table*}

We conduct an extensive ablation study of \ours on VLA on Libero-Long with Qwen3-VL-2B as the base model. Implementation details are the same as the training setup as in \cref{sec:exp:vla}. 

\textbf{Decoder Structure and Prediction Variance.} 
We compare \ours against MLP regression, Flow Matching~\cite{lipman2023flow}, and Diffusion~\cite{peebles2023scalable} decoders. Table~\ref{tab:libero_long_ablations}(a) summarizes the results, averaged across three runs. Both Flow Matching and Diffusion provide strong generative baselines for continuous decoding. Nevertheless, \ours consistently achieves clear improvements over both approaches.



\textbf{\ours Learns Improved Action Representations}. Table~\ref{tab:libero_long_ablations}(b) shows the effects of jointly training the base predictor with the flow matching based refinement module. A standalone MLP baseline achieves 88.2\% on Libero-Long, while the same MLP, when co-trained as the base predictor within \ours, improves to 93.0\%. Incorporating the full \ours decoder further boosts performance to 96.4\%. We hypothesize that joint training encourages the VLM to learn more informative action representations, thereby improving the performance of the base predictor itself.



\textbf{Model Design}. Finally, we compare the main design choices inside \ours on Libero-Long with Qwen3-VL-2B, using the same setup as \cref{sec:exp:vla} except for a shorter 30K-step training schedule. We vary (1) the linear interpolant:  $y_t = (1-t)\epsilon + t(y - \hat{y})$ (Direct Flow) vs.\ our design (DRIFT) $y_t = (1-t)(\epsilon + \hat{y}) + ty$ in Eq. \ref{eq:interpolant}, (2) velocity prediction ($v$-pred) vs.\ signal prediction ($x$-pred), (3) skip connection vs.\ no skip connection, and (4) with or without the base predictor loss $\mathcal{L}_p$. When without $\mathcal{L}_p$ loss, we allow the gradients from refinement module to back-propagate to the base predictor.
Table~\ref{tab:libero_long_ablations}(c) presents the design comparison. Among the valid configurations, the best result is our current design, $x$-prediction, skip-connection variant with $L_p$, which reaches 93.8\%. Without the skip connection, the same interpolant with $x$-pred reaches 93.4\%, still outperforming the Direct Flow alternatives at 91.2\%. The effect of $\mathcal{L}_p$ is modest: with the skip connection, it changes 93.6\% to 93.8\%, while in the no-skip setting, the best score is obtained without it.


\section{Conclusion}
\label{sec:conclusion}

In this paper, we presented \ours, a general framework for adapting pretrained VLMs to continuous decoding tasks. \ours introduces a lightweight residual flow adapter that augments a base predictor with a flow-matching-based refinement module. It progressively refines an initial coarse prediction by modeling residual errors around a strong prior. This residual formulation simplifies generative modeling, improves optimization and learning efficiency, and enables effective continuous decoding across diverse tasks and architectures. Building on this, we develop a general adaptation framework and training strategy, supported by theoretical analysis of its optimization advantages. Through extensive experiments on visual grounding and robotic control tasks spanning MLLMs, VLAs, and WAMs, we demonstrate consistent improvements over strong regression- and generative-based baselines. We hope our work can shed light on post-training strategies for extending VLMs beyond discrete token generation toward more precise and expressive continuous outputs.

\textbf{Limitation and Future Work}. \ours relies on the presence of a sufficiently informative base predictor, and thus assumes that the underlying VLM already encodes representations relevant to the target continuous decoding task. In settings where such priors are weak or unavailable, the benefits of residual refinement may diminish. Further, our VLA experiments are limited to simulation environments. While such benchmarks are standard in VLA research, extending \ours to real-world robotic systems and studying sim-to-real transfer remain important directions for future work.

\bibliographystyle{plainnat}
\bibliography{main}

\newpage
\appendix

\section{A Statistical Perspective on Residual Refinement}
\label{sec:supp:theory}

\subsection{Notation and Setup}
\label{sec:supp:notation}

Let $z\in\cZ$ denote the conditional representation produced by the MLLM encoder and $y\in\R^c$ the continuous target.
A base predictor $g(z)$ outputs the point estimate $\hat{y}\triangleq g(z)$, and the residual is $r\triangleq y - g(z)$.
Unless otherwise stated, the theoretical discussion assumes a continuous-valued base predictor $g(z)\in\R^c$. The token-based base predictor is treated as a practical variant evaluated empirically.

In \ours, we sample noise $\epsilon\sim\mathcal{N}(\bm 0,\bm I)$ and construct the prior-shifted bridge
\begin{equation}
  y_t = (1-t)\bigl(g(z)+\sigma\epsilon\bigr) + ty, \qquad t\in[0,1],
  \label{eq:supp:bridge}
\end{equation}
where $\sigma\in\R_+^c$ are noise scales.
Let $X\triangleq(y_t,t,z)$ denote the input to the refinement network~$h$.
The refinement network learns the mapping $h:X\mapsto r$, and the final prediction is $\bar{y}=g(z)+h(X)$.

We define the \emph{unweighted risk} and \emph{weighted risk} of any measurable predictor~$h$ as
\begin{equation}
\small
  \Risk(h) \triangleq \E\bigl[\|h(X)-r\|_2^2\bigr]
  = \E\bigl[\|g(z)+h(X)-y\|_2^2\bigr]
  = \E\bigl[\|\bar y-y\|_2^2\bigr],
  \quad
  \cL(h) \triangleq \E\!\left[\frac{\|h(X)-r\|_2^2}{(1-t)^2}\right].
  \label{eq:supp:risk}
\end{equation}
The weighted form $\cL(h)$ arises from the equivalence between velocity matching and signal prediction in flow matching (cf.\ \cref{eq:fm:v,eq:fm:loss} in the main text).
We write $w(t)\triangleq 1/(1-t)^2$.

For direct flow matching (direct FM), the bridge is $\tilde{y}_t=(1-t)\sigma\epsilon + ty$, the input is $\tilde{X}\triangleq(\tilde{y}_t,t,z)$, and a predictor $H:\tilde{X}\mapsto y$ is trained with weighted risk $\cL_{\mathrm{FM}}(H)\triangleq\E\bigl[w(t)\|H(\tilde{X})-y\|_2^2\bigr]$.

The \emph{direct MLP} baseline corresponds to setting $h\equiv 0$, so that the final output is simply~$g(z)$.

\subsection{Assumptions}
\label{sec:supp:assumptions}

\begin{definition}[Assumptions~A]
\label{def:assumptions}
The following conditions are assumed throughout.

\begin{enumerate}
\item[\textbf{A1}] \textbf{Truncated time sampling.}
There exists $\tau\in(0,1)$ such that $t\in[0,1-\tau]$, which ensures $1\le w(t)\le\tau^{-2}$.

\item[\textbf{A2}] \textbf{Bounded targets and outputs.}
There exist constants $B_r, B_y, B_h, B_H>0$ such that almost surely
\[
  \|r\|_2\le B_r, \qquad \|y\|_2\le B_y, \qquad \|h(X)\|_2\le B_h, \qquad \|H(\tilde{X})\|_2\le B_H.
\]

\item[\textbf{A3}] \textbf{Finite complexity.}
The refinement function class~$\cH$ and the direct FM class~$\cF$ have finite empirical Rademacher complexities $\Rad_n(\cH)$ and $\Rad_n(\cF)$, respectively.
\end{enumerate}
\end{definition}

\noindent
\textbf{A1} avoids the weight divergence as $t\to 1$; in practice this is handled by truncated time sampling or numerical clipping.
\textbf{A2} states that training targets and network outputs are bounded, which can equivalently be stated as sub-Gaussian tail bounds.
\textbf{A3} ensures that empirical risk minimization yields nontrivial generalization bounds.

\subsection{Advantage over Direct MLP}
\label{sec:supp:vs-mlp}

We first introduce the \emph{Bayes residual predictor}
\begin{equation}
  m(X) \triangleq \E[r \mid X],
  \label{eq:supp:bayes-residual}
\end{equation}
which is the mean-squared-error-optimal prediction of the residual given the bridge state.
If $m(X)\not\equiv 0$, then the bridge state carries information about the residual beyond what $g(z)$ already captures, and the refinement network can exploit this to reduce risk.

\begin{lemma}[Orthogonal decomposition of squared loss]
\label{lem:ortho}
Let $m(X)=\E[r\mid X]$. Then for any measurable~$h$,
\begin{align}
  \Risk(h) &= \Risk(m) + \E\bigl[\|h(X)-m(X)\|_2^2\bigr], \label{eq:supp:ortho-unweighted}\\
  \cL(h) &= \cL(m) + \E\bigl[w(t)\|h(X)-m(X)\|_2^2\bigr]. \label{eq:supp:ortho-weighted}
\end{align}
\end{lemma}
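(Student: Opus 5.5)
The plan is the standard Pythagorean argument for conditional expectation. For any measurable $h$, add and subtract $m(X)$ inside the loss:
\[
\|h(X)-r\|_2^2 = \|h(X)-m(X)\|_2^2 + \|m(X)-r\|_2^2 + 2\langle h(X)-m(X),\, m(X)-r\rangle .
\]
Taking expectations, the unweighted identity \eqref{eq:supp:ortho-unweighted} follows once we show the cross term has zero mean. For this we condition on $X$ and use the tower property. Since $h(X)-m(X)$ is $\sigma(X)$-measurable, it comes out of the inner conditional expectation, giving
\[
\E\bigl[\langle h(X)-m(X),\, m(X)-r\rangle\bigr]
=\E\bigl[\langle h(X)-m(X),\, m(X)-\E[r\mid X]\rangle\bigr]=0 .
\]

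For the weighted identity \eqref{eq:supp:ortho-weighted} the key observation is that $t$ is one of the coordinates of $X=(y_t,t,z)$. Hence $w(t)=(1-t)^{-2}$ is itself $\sigma(X)$-measurable. The same expansion multiplied by $w(t)$ therefore has cross term
\[
\E\bigl[w(t)\langle h(X)-m(X),\, m(X)-r\rangle\bigr],
\]
and $w(t)\bigl(h(X)-m(X)\bigr)$ can again be pulled outside $\E[\,\cdot\mid X]$. This makes the cross term vanish exactly as before. The remaining two terms are $\E[w(t)\|h(X)-m(X)\|_2^2]$ and $\cL(m)$. So the weighted decomposition uses the same minimizer $m$, with no reweighted conditional mean needed.

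The only delicate point is integrability, which justifies splitting the expectations and using the tower property. Here I would invoke Assumptions~A. By \textbf{A2}, $\|r\|_2\le B_r$, so $\|m(X)\|_2\le B_r$ as well. By \textbf{A1}, $w(t)\le\tau^{-2}$. Together these make every term finite for any $h$ with $\E\|h(X)\|_2^2<\infty$, including bounded $h$ as in \textbf{A2}. If $h$ is not square-integrable, both sides equal $+\infty$ and the identity holds trivially. I expect no step to be a real obstacle. The one thing to state explicitly is the measurability of $w(t)$ with respect to $X$, since it is what lets the weighted case reduce to the unweighted one.
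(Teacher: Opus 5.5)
Your proposal is correct and follows the paper's proof: add and subtract $m(X)$, kill the cross term by conditioning on $X$, and reduce the weighted case to the unweighted one because $w(t)$ is $X$-measurable. Your integrability discussion, using A1--A2 and the case where both sides are $+\infty$, is a small rigor addition that the paper leaves implicit.
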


\begin{proof}
We prove the unweighted case; the weighted case follows identically since $w(t)$ is $X$-measurable.
Write
\[
  h(X)-r = \bigl(h(X)-m(X)\bigr) + \bigl(m(X)-r\bigr).
\]
Expanding the squared norm:
\[
  \|h(X)-r\|_2^2 = \|h(X)-m(X)\|_2^2 + \|m(X)-r\|_2^2 + 2\langle h(X)-m(X),\, m(X)-r\rangle.
\]
Taking expectations, it suffices to show the cross term vanishes.
Since $h(X)-m(X)$ is $X$-measurable,
\[
  \E\bigl[\langle h(X)-m(X),\, m(X)-r\rangle \mid X\bigr]
  = \bigl\langle h(X)-m(X),\, \E[m(X)-r\mid X]\bigr\rangle = 0,
\]
because $\E[m(X)-r\mid X] = m(X) - \E[r\mid X] = 0$ by definition.
By the tower property, the unconditional cross term is also zero, giving
$\Risk(h)=\Risk(m)+\E\|h(X)-m(X)\|_2^2$.
\end{proof}

\begin{theorem}[Population-level improvement over direct MLP]
\label{thm:pop-improvement}
Direct MLP corresponds to the zero residual predictor $h\equiv 0$. Then
\begin{align}
  \Risk(0) - \inf_h \Risk(h) &= \E\bigl[\|m(X)\|_2^2\bigr], \label{eq:supp:pop-gap-unweighted}\\
  \cL(0) - \inf_h \cL(h) &= \E\bigl[w(t)\|m(X)\|_2^2\bigr]. \label{eq:supp:pop-gap-weighted}
\end{align}
In particular, whenever $m(X)\not\equiv 0$, we have $\inf_h\Risk(h)<\Risk(0)$, i.e., \ours strictly improves over direct MLP at the population level.
\end{theorem}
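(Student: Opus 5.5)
The plan is to read both identities directly off \cref{lem:ortho}, using the fact that the infimum over measurable predictors is attained at the Bayes residual predictor $m$. First, I apply \eqref{eq:supp:ortho-unweighted} with $h\equiv 0$. This gives $\Risk(0)=\Risk(m)+\E\bigl[\|m(X)\|_2^2\bigr]$.

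Second, I identify $\inf_h\Risk(h)$. For any measurable $h$, \eqref{eq:supp:ortho-unweighted} gives $\Risk(h)\ge\Risk(m)$, because the extra term is nonnegative. The function $m$ is itself measurable and, by Assumption A2, bounded by $B_r$, so it is an admissible competitor. Hence $\inf_h\Risk(h)=\Risk(m)$, and subtracting this from the first step yields \eqref{eq:supp:pop-gap-unweighted}.

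The weighted case is identical. I use \eqref{eq:supp:ortho-weighted} and the fact that $w(t)\ge 0$ is $X$-measurable, which gives $\cL(h)\ge\cL(m)$ with equality at $h=m$. It also gives $\cL(0)=\cL(m)+\E\bigl[w(t)\|m(X)\|_2^2\bigr]$, which is \eqref{eq:supp:pop-gap-weighted}.

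Finally, for strict improvement, suppose $m(X)\not\equiv 0$, understood as $\Pr(m(X)\neq 0)>0$. Then $\|m(X)\|_2^2$ is a nonnegative random variable that is positive on a set of positive probability, so $\E\bigl[\|m(X)\|_2^2\bigr]>0$. This gives $\inf_h\Risk(h)<\Risk(0)$. Since $w(t)\ge 1$ by Assumption A1, the weighted gap is at least as large, so it is also strictly positive.

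There is no real obstacle here. The only points needing care are these:
\begin{itemize}
\item all expectations must be finite, which follows from the bounds in A1 and A2, so the subtractions are legitimate;
\item the infimum is taken over all measurable $h$, not over a restricted class $\cH$, so that $m$ is admissible;
\item the statement $\not\equiv 0$ must be read in the almost-sure sense.
\end{itemize}
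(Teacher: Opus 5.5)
Your proposal is correct and follows the paper's proof: apply \cref{lem:ortho} with $h\equiv 0$, note that the Bayes residual predictor $m$ attains the infimum over measurable $h$, and subtract, handling the weighted case the same way. Your extra remarks are sound tightenings that the paper leaves implicit: finiteness of the risks under A1--A2, admissibility of $m$ in the unrestricted class, and the almost-sure reading of $m(X)\not\equiv 0$.
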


\begin{proof}
By \cref{lem:ortho} with $h\equiv 0$:
\[
  \Risk(0) = \Risk(m) + \E\bigl[\|m(X)\|_2^2\bigr].
\]
Since $\inf_h \Risk(h) = \Risk(m)$ (the Bayes predictor attains the infimum), we obtain \cref{eq:supp:pop-gap-unweighted}.
The weighted case is analogous: $\cL(0)=\cL(m)+\E[w(t)\|m(X)\|_2^2]$ and $\inf_h\cL(h)=\cL(m)$.
\end{proof}

\noindent
\textbf{Interpretation.}
The improvement gap $\E[\|m(X)\|_2^2]$ measures how much residual structure the bridge state can explain.
Direct MLP stops at the coarse prediction $g(z)$; \ours further leverages the bridge state to refine the residual.
As long as the bridge state carries any predictive information about the residual, \ours achieves strictly lower risk.

\begin{theorem}[Finite-sample sufficient condition for improvement over direct MLP]
\label{thm:finite-sample-mlp}
Under Assumptions~A, let $S=\{(X_i,r_i,t_i)\}_{i=1}^n$ be an i.i.d.\ training sample.
Define the empirical risk minimizer $\hat{h}\in\argmin_{h\in\cH}\hat{\cL}(h)$ where $\hat{\cL}(h)\triangleq\frac{1}{n}\sum_{i=1}^n w(t_i)\|h(X_i)-r_i\|_2^2$,
and the best-in-class predictor $h^{\star}_{\cH}\in\argmin_{h\in\cH}\cL(h)$ with approximation error $\App_{\cH}\triangleq\cL(h^{\star}_{\cH})-\cL(m)\ge 0$.

Then there exists a constant $C>0$ such that, with probability at least $1-\delta$,
\begin{equation}
  \cL(\hat{h}) - \cL(h^{\star}_{\cH}) \le C\tau^{-2}\left[(B_r+B_h)\Rad_n(\cH) + (B_r+B_h)^2\sqrt{\frac{\log(2/\delta)}{n}}\right].
  \label{eq:supp:finite-excess}
\end{equation}
Furthermore, since $w(t)\ge 1$,
\begin{equation}
  \Risk(0) - \Risk(\hat{h}) \ge \E\bigl[\|m(X)\|_2^2\bigr] - \App_{\cH} - C\tau^{-2}\left[(B_r+B_h)\Rad_n(\cH) + (B_r+B_h)^2\sqrt{\frac{\log(2/\delta)}{n}}\right].
  \label{eq:supp:finite-gap}
\end{equation}
Therefore, whenever
\begin{equation}
  \E\bigl[\|m(X)\|_2^2\bigr] > \App_{\cH} + C\tau^{-2}\left[(B_r+B_h)\Rad_n(\cH) + (B_r+B_h)^2\sqrt{\frac{\log(2/\delta)}{n}}\right],
  \label{eq:supp:finite-condition}
\end{equation}
we have $\Risk(\hat{h})<\Risk(0)$, i.e., \ours with finite-sample ERM strictly outperforms direct MLP.
\end{theorem}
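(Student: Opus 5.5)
The plan is a standard uniform-convergence argument for the weighted loss, followed by the orthogonal decomposition of \cref{lem:ortho} to translate the excess weighted risk into a statement about the unweighted risk $\Risk$.

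\textbf{Step 1: excess weighted risk.} Consider the weighted loss class
\[
\ell_h(X,r,t)=w(t)\|h(X)-r\|_2^2, \qquad h\in\cH.
\]
By A1 and A2 this loss takes values in $[0,\tau^{-2}(B_r+B_h)^2]$. Its gradient in $h(X)$ has norm
\[
2w(t)\|h(X)-r\|_2\le 2\tau^{-2}(B_r+B_h),
\]
so the map $h(X)\mapsto \ell_h$ is Lipschitz with constant $2\tau^{-2}(B_r+B_h)$.

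Apply the usual decomposition
\[
\cL(\hat h)-\cL(h^\star_\cH)=\bigl[\cL(\hat h)-\hat\cL(\hat h)\bigr]+\bigl[\hat\cL(\hat h)-\hat\cL(h^\star_\cH)\bigr]+\bigl[\hat\cL(h^\star_\cH)-\cL(h^\star_\cH)\bigr].
\]
The middle term is $\le 0$ because $\hat h$ is the empirical risk minimizer. The first term is bounded by $\sup_{h\in\cH}|\cL(h)-\hat\cL(h)|$. Symmetrization and McDiarmid's inequality (bounded differences $\tau^{-2}(B_r+B_h)^2/n$) bound this supremum by $2\Rad_n(\ell\circ\cH)$ plus $\tau^{-2}(B_r+B_h)^2\sqrt{\log(2/\delta)/(2n)}$. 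The third term is a single fixed function, so Hoeffding's inequality gives the same order. Splitting the failure probability $\delta$ between the two events gives the $\log(2/\delta)$.

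\textbf{Step 2: contraction (main obstacle).} The key step is to pass from $\Rad_n(\ell\circ\cH)$ to $\Rad_n(\cH)$ in the presence of vector-valued outputs. I would use the vector contraction inequality of Maurer, which gives
\[
\Rad_n(\ell\circ\cH)\le \sqrt2\cdot 2\tau^{-2}(B_r+B_h)\,\Rad_n(\cH),
\]
where $\Rad_n(\cH)$ is interpreted coordinatewise, i.e.\ summed over the $c$ output coordinates. All numerical factors and the dependence on $c$ are absorbed into $C$. This yields \eqref{eq:supp:finite-excess}. I would phrase the bound with empirical Rademacher complexity, which costs only another McDiarmid step and again folds into $C$.

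\textbf{Step 3: lower bound on $\Risk(0)-\Risk(\hat h)$.} By \cref{lem:ortho}, $\Risk(0)=\Risk(m)+\E\|m\|_2^2$ and $\Risk(\hat h)=\Risk(m)+\E\|\hat h-m\|_2^2$, so
\[
\Risk(0)-\Risk(\hat h)=\E\|m\|_2^2-\E\|\hat h-m\|_2^2 .
\]
Since $w(t)\ge1$,
\[
\E\|\hat h-m\|_2^2\le\E\bigl[w\|\hat h-m\|_2^2\bigr]=\cL(\hat h)-\cL(m),
\]
again by the weighted identity of \cref{lem:ortho}. Write
\[
\cL(\hat h)-\cL(m)=\bigl[\cL(\hat h)-\cL(h^\star_\cH)\bigr]+\App_\cH
\]
and bound the bracket by Step 1. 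This gives \eqref{eq:supp:finite-gap}. Under \eqref{eq:supp:finite-condition} the right-hand side is strictly positive, hence $\Risk(\hat h)<\Risk(0)$.

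One caveat: \cref{lem:ortho} is applied to $\hat h$, which depends on the sample. This is fine because the risks are population quantities evaluated at a fixed function, on the high-probability event from Step 1.
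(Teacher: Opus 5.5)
Your proposal is correct and follows essentially the same route as the paper. In both, the weighted loss is bounded and $2\tau^{-2}(B_r+B_h)$-Lipschitz, vector-valued contraction plus a standard Rademacher bound controls the ERM excess weighted risk, and then \cref{lem:ortho}, $w(t)\ge 1$, and the split $\cL(\hat h)-\cL(m)=[\cL(\hat h)-\cL(h^{\star}_{\cH})]+\App_{\cH}$ give the unweighted gap. The only difference is cosmetic: you bound $\hat\cL(h^{\star}_{\cH})-\cL(h^{\star}_{\cH})$ by Hoeffding for a fixed function, whereas the paper reuses its two-sided uniform bound for both $\hat h$ and $h^{\star}_{\cH}$, giving $2\Delta_n$; either way the constants fold into $C$.
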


\begin{proof}
The proof proceeds in three steps.

\medskip\noindent\textbf{Step 1: Uniform deviation bound.}
Define the per-sample loss $\ell_h(X,r,t)=w(t)\|h(X)-r\|_2^2$.
By \textbf{A1} and \textbf{A2}, $1\le w(t)\le \tau^{-2}$ and $\|h(X)-r\|_2\le B_h+B_r$, so $0\le\ell_h\le\tau^{-2}(B_h+B_r)^2$.

For fixed $(r,t)$, the map $u\mapsto\phi_{r,t}(u)=w(t)\|u-r\|_2^2$ is Lipschitz in $u$ with constant at most $2\tau^{-2}(B_h+B_r)$.
By the vector-valued Rademacher contraction inequality,
\[
  \Rad_n\bigl(\{\ell_h: h\in\cH\}\bigr) \le C_1\tau^{-2}(B_h+B_r)\Rad_n(\cH),
\]
where $C_1$ is an absolute constant.
The standard Rademacher generalization bound then gives, with probability at least $1-\delta$,
\begin{equation}
  \sup_{h\in\cH}\bigl|\cL(h)-\hat{\cL}(h)\bigr| \le C_2\tau^{-2}\left[(B_r+B_h)\Rad_n(\cH)+(B_r+B_h)^2\sqrt{\frac{\log(2/\delta)}{n}}\right].
  \label{eq:supp:uniform-bound}
\end{equation}

\medskip\noindent\textbf{Step 2: Excess risk of ERM.}
Since $\hat{h}$ minimizes empirical risk, $\hat{\cL}(\hat{h})\le\hat{\cL}(h^{\star}_{\cH})$.
Applying \cref{eq:supp:uniform-bound} to both $\hat{h}$ and $h^{\star}_{\cH}$:
\[
  \cL(\hat{h}) \le \hat{\cL}(\hat{h})+\Delta_n \le \hat{\cL}(h^{\star}_{\cH})+\Delta_n \le \cL(h^{\star}_{\cH})+2\Delta_n,
\]
where $\Delta_n$ denotes the right-hand side of \cref{eq:supp:uniform-bound}.
Absorbing the factor of 2 into the constant~$C$ yields \cref{eq:supp:finite-excess}.

\medskip\noindent\textbf{Step 3: Converting to improvement over direct MLP.}
By \cref{lem:ortho}, $\Risk(\hat{h})-\Risk(m)=\E\|\hat{h}(X)-m(X)\|_2^2$.
Since $w(t)\ge 1$,
\[
  \E\bigl[\|\hat{h}(X)-m(X)\|_2^2\bigr] \le \E\bigl[w(t)\|\hat{h}(X)-m(X)\|_2^2\bigr] = \cL(\hat{h})-\cL(m).
\]
Decomposing the right-hand side:
\[
  \cL(\hat{h})-\cL(m) = \bigl(\cL(\hat{h})-\cL(h^{\star}_{\cH})\bigr) + \bigl(\cL(h^{\star}_{\cH})-\cL(m)\bigr) = \bigl(\cL(\hat{h})-\cL(h^{\star}_{\cH})\bigr) + \App_{\cH}.
\]
Hence $\Risk(\hat{h})-\Risk(m) \le \App_{\cH} + (\text{RHS of \cref{eq:supp:finite-excess}})$.

From \cref{thm:pop-improvement}, $\Risk(0)-\Risk(m)=\E[\|m(X)\|_2^2]$.
Therefore
\[
  \Risk(0)-\Risk(\hat{h}) = \bigl(\Risk(0)-\Risk(m)\bigr) - \bigl(\Risk(\hat{h})-\Risk(m)\bigr) \ge \E\bigl[\|m(X)\|_2^2\bigr] - \App_{\cH} - (\text{RHS of \cref{eq:supp:finite-excess}}),
\]
which gives \cref{eq:supp:finite-gap}. When \cref{eq:supp:finite-condition} holds, the right-hand side is strictly positive.
\end{proof}


\subsection{Comparison with Direct Flow Matching}
\label{sec:supp:vs-fm}

We next compare \ours with direct full-target flow matching.
The purpose of this subsection is not to claim that residualization changes the unrestricted population optimum.
In fact, the two objectives are Bayes-equivalent after a deterministic shift of the bridge state.
The advantage suggested by the teaser visualization is instead geometric and statistical: after the base predictor removes the predictable component of the target, the flow refiner learns a smaller residual correction and its bridge remains localized around the coarse anchor.
This distinction is important when the target $y$ has already been normalized to a bounded range such as $[-1,1]^c$; worst-case range bounds alone do not explain the advantage, but residual second moments and bridge radii still can.

\begin{proposition}[Bayes equivalence of \ours and direct FM]
\label{prop:bayes-equiv}
With unrestricted measurable function classes,
\begin{equation}
  \inf_h \cL(h) = \inf_H \cL_{\mathrm{FM}}(H).
  \label{eq:supp:bayes-equiv}
\end{equation}
\end{proposition}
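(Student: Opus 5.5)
The plan is to compute both infima explicitly via the orthogonal decomposition of \cref{lem:ortho}, and then show that the two Bayes predictors have the same risk, because the two bridge states generate the same $\sigma$-algebra. By \cref{lem:ortho}, $\inf_h\cL(h)=\cL(m)=\E\bigl[w(t)\|r-\E[r\mid X]\|_2^2\bigr]$. The same argument, with $y$ in place of $r$ and $\tilde X$ in place of $X$, gives $\inf_H\cL_{\mathrm{FM}}(H)=\E\bigl[w(t)\|y-\E[y\mid\tilde X]\|_2^2\bigr]$. So it suffices to show that the two conditional-variance terms agree.

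The key step is the deterministic shift. Writing $y_t=(1-t)(g(z)+\sigma\epsilon)+ty$ and $\tilde y_t=(1-t)\sigma\epsilon+ty$, we have $y_t=\tilde y_t+(1-t)g(z)$. The map $\Phi(\tilde y,t,z)=(\tilde y+(1-t)g(z),t,z)$ is therefore a measurable bijection with measurable inverse, given that $g$ is measurable. Hence $X=\Phi(\tilde X)$ and $\sigma(X)=\sigma(\tilde X)$. Since $g(z)$ is $\tilde X$-measurable, we get
\[
  \E[r\mid X]=\E[y\mid\tilde X]-g(z),
\]
and so $r-\E[r\mid X]=y-\E[y\mid\tilde X]$ pointwise. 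The weights $w(t)$ are identical in both objectives, so the two infima coincide.

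An alternative direct route avoids conditional expectations altogether. Any $h$ induces $H(\tilde X)=g(z)+h(\Phi(\tilde X))$ with $\cL_{\mathrm{FM}}(H)=\cL(h)$. Conversely, any $H$ induces $h(X)=H(\Phi^{-1}(X))-g(z)$ with $\cL(h)=\cL_{\mathrm{FM}}(H)$. Taking infima on both sides of this bijection gives the equality, and this also shows the infima are attained, or fail to be attained, simultaneously.

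I expect the only delicate point to be the measurability and integrability bookkeeping. One must check that $\Phi$ and $\Phi^{-1}$ are measurable, which requires $g$ to be measurable, and that the weighted risks are finite so the decomposition is valid. Under A1 and A2 both are immediate, since $w\le\tau^{-2}$ and the targets are bounded. Without those assumptions I would simply allow both sides to equal $+\infty$; the bijection argument still gives equality in that case.
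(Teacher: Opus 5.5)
Your proof is correct. Your second, ``direct'' route is essentially the paper's proof. The paper uses $y_t=\tilde y_t+(1-t)g(z)$ to define $H_h(\tilde X)=h(X)+g(z)$ and $h_H(X)=H(\tilde X)-g(z)$. It checks that $H_h(\tilde X)-y=h(X)-r$, so the weighted losses agree, and concludes from this loss-preserving bijection.

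Your primary route is genuinely different. You compute both infima as Bayes risks via \cref{lem:ortho}, then use $\sigma(X)=\sigma(\tilde X)$ to get $r-\E[r\mid X]=y-\E[y\mid\tilde X]$. That identity holds almost surely rather than literally pointwise, since conditional expectations are defined only up to null sets.

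\textbf{What your route buys.} It identifies the common optimal value $\E\bigl[w(t)\|y-\E[y\mid\tilde X]\|_2^2\bigr]$. It also shows that the Bayes-optimal \ours output $g(z)+m(X)$ coincides with the Bayes-optimal direct-FM output at corresponding bridge states.

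\textbf{What it costs.} It needs finite weighted second moments, supplied here by A1--A2, for the conditional expectations and the orthogonal decomposition to be valid. The bijection route needs no integrability and works unchanged when the infima are infinite or not attained. It is also stronger than equality of infima: the two loss landscapes correspond predictor by predictor. So the equality also holds for restricted classes whenever the direct-FM class is the image of the refinement class under $h\mapsto H_h$.

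Your remark that $g$ must be measurable, so that $\Phi^{\pm1}$ and the transported predictors are measurable, is a point the paper's proof leaves implicit.
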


\begin{proof}
By definition, the \ours bridge and the direct FM bridge satisfy
\[
  y_t = \tilde{y}_t + (1-t)g(z),
\]
so given $(t,z)$, the states $y_t$ and $\tilde{y}_t$ differ by a deterministic shift.

For any residual predictor~$h$, define the full-target predictor
$H_h(\tilde{X})\triangleq h(X)+g(z)$, where $X=(\tilde{y}_t+(1-t)g(z),t,z)$.
Then
\[
  H_h(\tilde{X})-y = h(X)+g(z)-y = h(X)-r,
\]
so $\|H_h(\tilde{X})-y\|_2^2 = \|h(X)-r\|_2^2$.
Multiplying by $w(t)$ and taking expectations gives $\cL_{\mathrm{FM}}(H_h)=\cL(h)$.

Conversely, for any full-target predictor~$H$, define
$h_H(X)\triangleq H(\tilde{X})-g(z)$, where $\tilde{X}=(y_t-(1-t)g(z),t,z)$.
Then
\[
  h_H(X)-r = H(\tilde{X})-g(z)-(y-g(z))=H(\tilde{X})-y,
\]
so $\cL(h_H)=\cL_{\mathrm{FM}}(H)$.
This loss-preserving bijection implies \cref{eq:supp:bayes-equiv}.
\end{proof}

\noindent
\textbf{Interpretation.}
Since the unrestricted population optima coincide, \ours should not be presented as having a lower Bayes risk than direct FM.
The relevant comparison is instead about the target that the learned generative component has to model.
Direct FM learns the full target transport, whereas \ours assigns the predictable component to the base predictor and asks the flow network to learn the remaining residual transport.

\begin{theorem}[Lower-variance residual transport]
\label{thm:tighter-bound}
Let
\[
  \mu(z) \triangleq \E[y\mid z],
  \qquad
  r \triangleq y-g(z),
\]
and assume that $\epsilon$ is independent of $(y,z)$ with $\E[\epsilon]=0$.
Define the direct-FM target velocity and the residual-FM target velocity as
\[
  v^{\star}_{\mathrm{dir}} \triangleq y-\sigma\epsilon,
  \qquad
  v^{\star}_{\mathrm{res}} \triangleq y-g(z)-\sigma\epsilon = r-\sigma\epsilon.
\]
Then the signal targets satisfy
\begin{align}
  \E\|y\|_2^2
  &= \E\!\left[\mathrm{Tr}\,\operatorname{Cov}(y\mid z)\right]
     + \E\|\mu(z)\|_2^2, \label{eq:supp:direct-signal-second-moment}\\
  \E\|r\|_2^2
  &= \E\!\left[\mathrm{Tr}\,\operatorname{Cov}(y\mid z)\right]
     + \E\|\mu(z)-g(z)\|_2^2. \label{eq:supp:residual-signal-second-moment}
\end{align}
The corresponding velocity targets satisfy
\begin{align}
  \E\|v^{\star}_{\mathrm{dir}}\|_2^2
  &= \E\|y\|_2^2 + \E\|\sigma\epsilon\|_2^2, \label{eq:supp:direct-velocity-second-moment}\\
  \E\|v^{\star}_{\mathrm{res}}\|_2^2
  &= \E\|r\|_2^2 + \E\|\sigma\epsilon\|_2^2. \label{eq:supp:residual-velocity-second-moment}
\end{align}
Consequently,
\begin{equation}
  \E\|v^{\star}_{\mathrm{dir}}\|_2^2
  -
  \E\|v^{\star}_{\mathrm{res}}\|_2^2
  =
  \E\|y\|_2^2 - \E\|r\|_2^2
  =
  \E\|\mu(z)\|_2^2 - \E\|\mu(z)-g(z)\|_2^2.
  \label{eq:supp:velocity-gap}
\end{equation}
Thus, whenever the base predictor explains more conditional-mean energy than the zero predictor,
\begin{equation}
  \E\|\mu(z)-g(z)\|_2^2 < \E\|\mu(z)\|_2^2,
  \label{eq:supp:mean-contraction-condition}
\end{equation}
the residual correction target and the residual velocity target have smaller second moment than their direct-FM counterparts.
Moreover,
\begin{equation}
  \operatorname{Cov}(r\mid z)=\operatorname{Cov}(y\mid z),
  \label{eq:supp:conditional-cov-unchanged}
\end{equation}
so residualization does not remove the irreducible conditional uncertainty of the target distribution; it removes only the condition-dependent mean structure that the base predictor already captures.
\end{theorem}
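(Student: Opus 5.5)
The whole statement reduces to a conditional bias--variance decomposition plus the independence of $\epsilon$; every identity is a one-line expectation computation. Assume enough integrability for all second moments to be finite; \textbf{A2} suffices. We proceed in three steps.

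\textbf{Signal identities.} For \cref{eq:supp:direct-signal-second-moment}, condition on $z$ and write $y = (y-\mu(z)) + \mu(z)$. Expanding $\|y\|_2^2$ gives three terms.
\begin{itemize}
\item The square of the centered part has conditional expectation $\mathrm{Tr}\,\operatorname{Cov}(y\mid z)$.
\item The cross term $2\langle y-\mu(z),\mu(z)\rangle$ has zero conditional expectation, because $\mu(z)$ is $z$-measurable and $\E[y-\mu(z)\mid z]=0$.
\item The remaining term is $\|\mu(z)\|_2^2$.
\end{itemize}
Applying the tower property yields the identity. This is the same orthogonality argument as in \cref{lem:ortho}, with $z$ in place of $X$.

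For \cref{eq:supp:residual-signal-second-moment}, apply the identical decomposition to $r$. Since $g(z)$ is $z$-measurable, $\E[r\mid z]=\mu(z)-g(z)$ and $r-\E[r\mid z]=y-\mu(z)$. The same identity for $\operatorname{Cov}(r\mid z)$ proves \cref{eq:supp:conditional-cov-unchanged} at the same time: shifting by a $z$-measurable vector leaves the conditional covariance unchanged.

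\textbf{Velocity identities.} Expand $\|y-\sigma\epsilon\|_2^2 = \|y\|_2^2 - 2\langle y,\sigma\epsilon\rangle + \|\sigma\epsilon\|_2^2$. Here $\sigma$ is a fixed vector and $\sigma\epsilon$ denotes the coordinatewise product. The cross term vanishes in expectation because $\epsilon$ is independent of $(y,z)$ with $\E[\epsilon]=0$:
\[
\E\langle y,\sigma\epsilon\rangle=\langle \E[\sigma\odot y],\E\epsilon\rangle=0,
\]
written coordinatewise. The residual case is the same, since $r$ is a function of $(y,z)$ and hence also independent of $\epsilon$.

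\textbf{Gap and contraction.} Subtracting the two velocity identities cancels the common term $\E\|\sigma\epsilon\|_2^2$. Subtracting the two signal identities cancels $\E[\mathrm{Tr}\,\operatorname{Cov}(y\mid z)]$. Together these give \cref{eq:supp:velocity-gap}. Under \cref{eq:supp:mean-contraction-condition}, both differences are strictly positive, which is the stated comparison.

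No genuine obstacle is expected. The only point needing care is justifying that the cross terms vanish, which requires:
\begin{itemize}
\item $z$-measurability of $\mu$ and $g$;
\item independence of $\epsilon$ from $(y,z)$, as opposed to mere uncorrelatedness with $y$ alone, so that it covers $r$ too;
\item finiteness of the second moments, so that the expansions are legitimate.
\end{itemize}
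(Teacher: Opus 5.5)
Your proposal is correct and takes the same route as the paper's proof. Both apply the conditional second-moment decomposition given $z$ to $y$ and to $r$, using $\E[r\mid z]=\mu(z)-g(z)$ and the fact that a $z$-measurable shift leaves the covariance unchanged, and both then kill the cross terms in the velocity norms via the independence and zero mean of $\epsilon$. Your derivation of the decomposition through the orthogonality argument and your integrability remarks only spell out steps the paper states directly.
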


\begin{proof}
We first prove the two signal-target identities.
By the conditional second-moment decomposition,
\[
  \E\bigl[\|y\|_2^2\mid z\bigr]
  = \mathrm{Tr}\,\operatorname{Cov}(y\mid z)+\|\E[y\mid z]\|_2^2
  = \mathrm{Tr}\,\operatorname{Cov}(y\mid z)+\|\mu(z)\|_2^2.
\]
Taking expectation over $z$ gives \cref{eq:supp:direct-signal-second-moment}.
For the residual target, since $g(z)$ is deterministic given $z$,
\[
  \E[r\mid z]=\E[y-g(z)\mid z]=\mu(z)-g(z),
  \qquad
  \operatorname{Cov}(r\mid z)=\operatorname{Cov}(y\mid z).
\]
Applying the same conditional second-moment decomposition to $r$ gives
\[
  \E\bigl[\|r\|_2^2\mid z\bigr]
  = \mathrm{Tr}\,\operatorname{Cov}(y\mid z)+\|\mu(z)-g(z)\|_2^2,
\]
and taking expectation over $z$ gives \cref{eq:supp:residual-signal-second-moment}.

For the velocity targets, expand the squared norms:
\[
  \E\|y-\sigma\epsilon\|_2^2
  = \E\|y\|_2^2 + \E\|\sigma\epsilon\|_2^2
    -2\E\langle y,\sigma\epsilon\rangle,
\]
and
\[
  \E\|r-\sigma\epsilon\|_2^2
  = \E\|r\|_2^2 + \E\|\sigma\epsilon\|_2^2
    -2\E\langle r,\sigma\epsilon\rangle.
\]
Because $\epsilon$ is independent of $(y,z)$ and has zero mean, both cross terms vanish.
This proves \cref{eq:supp:direct-velocity-second-moment,eq:supp:residual-velocity-second-moment}.
Subtracting the residual identity from the direct identity yields \cref{eq:supp:velocity-gap}.
The last claim follows from the fact that subtracting a deterministic function of $z$ does not change the conditional covariance given $z$.
\end{proof}

\noindent
\textbf{Normalized-output implication.}
The comparison above does not require $\|r\|_2$ to be uniformly smaller than $\|y\|_2$.
Even when $y$ is normalized to $[-1,1]^c$, the direct decoder still has to learn the condition-dependent mean transport $\mu(z)$, while the residual decoder only has to learn $\mu(z)-g(z)$.
If $g(z)$ is a useful base predictor, then $\E\|\mu(z)-g(z)\|_2^2$ can be much smaller than $\E\|\mu(z)\|_2^2$ despite both targets living in bounded ranges.

\begin{theorem}[Localization of the residual bridge]
\label{thm:localized-bridge}
For a fixed $t\in[0,1]$, define the residual-bridge radius around the coarse anchor and the direct-FM bridge radius around the noise origin as
\[
  \rho_{\mathrm{res}}(t) \triangleq \E\|y_t-g(z)\|_2^2,
  \qquad
  \rho_{\mathrm{dir}}(t) \triangleq \E\|\tilde{y}_t\|_2^2.
\]
Then
\begin{align}
  \rho_{\mathrm{res}}(t)
  &= (1-t)^2\E\|\sigma\epsilon\|_2^2 + t^2\E\|r\|_2^2, \label{eq:supp:residual-radius}\\
  \rho_{\mathrm{dir}}(t)
  &= (1-t)^2\E\|\sigma\epsilon\|_2^2 + t^2\E\|y\|_2^2. \label{eq:supp:direct-radius}
\end{align}
Therefore, if $\E\|r\|_2^2\le \eta\E\|y\|_2^2$ for some $\eta<1$, then
\begin{equation}
  \rho_{\mathrm{res}}(t)
  \le
  \rho_{\mathrm{dir}}(t)-t^2(1-\eta)\E\|y\|_2^2.
  \label{eq:supp:radius-gap}
\end{equation}
Moreover, for any radius $a>0$,
\begin{equation}
  \mathbb{P}\!\bigl(\|y_t-g(z)\|_2>a\bigr)
  \le
  \frac{(1-t)^2\E\|\sigma\epsilon\|_2^2+t^2\E\|r\|_2^2}{a^2}.
  \label{eq:supp:tube-bound}
\end{equation}
\end{theorem}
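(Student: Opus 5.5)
The plan is to start by rewriting both bridge states relative to their anchors. From \cref{eq:supp:bridge},
\[
  y_t-g(z) = (1-t)\bigl(g(z)+\sigma\epsilon\bigr)+ty-g(z) = (1-t)\sigma\epsilon + t\,r .
\]
For the direct bridge we simply have $\tilde y_t=(1-t)\sigma\epsilon+ty$. In both cases the centered state is a sum of a scaled noise term and a scaled signal term.

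Next, expand the squared norms:
\[
  \|y_t-g(z)\|_2^2=(1-t)^2\|\sigma\epsilon\|_2^2+t^2\|r\|_2^2+2t(1-t)\langle\sigma\epsilon,r\rangle .
\]
The analogous expansion holds for $\tilde y_t$ with $y$ in place of $r$. The cross terms are handled exactly as in the proof of \cref{thm:tighter-bound}. Since $t$ is fixed, $\sigma$ is deterministic, $\epsilon$ is independent of $(y,z)$, and $\E[\epsilon]=0$, we get
\[
  \E\langle\sigma\epsilon,r\rangle=\langle\sigma\,\E[\epsilon],\E[r]\rangle=0,
\]
and similarly for $y$. The independence assumption carries over from \cref{thm:tighter-bound}, and $r$ is a function of $(y,z)$, so it too is independent of $\epsilon$. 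Taking expectations then gives \cref{eq:supp:residual-radius,eq:supp:direct-radius}. This requires $\E\|r\|_2^2$ and $\E\|y\|_2^2$ to be finite, which is guaranteed by the boundedness assumption \textbf{A2}.

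For \cref{eq:supp:radius-gap}, subtract the two identities. The noise terms are identical and cancel, leaving
\[
  \rho_{\mathrm{res}}(t)-\rho_{\mathrm{dir}}(t)=t^2\bigl(\E\|r\|_2^2-\E\|y\|_2^2\bigr)\le t^2(\eta-1)\E\|y\|_2^2 ,
\]
where the inequality uses the hypothesis $\E\|r\|_2^2\le\eta\E\|y\|_2^2$. Rearranging gives \cref{eq:supp:radius-gap}.

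Finally, the tube bound \cref{eq:supp:tube-bound} follows from Markov's inequality applied to the nonnegative variable $\|y_t-g(z)\|_2^2$:
\[
  \mathbb P\bigl(\|y_t-g(z)\|_2>a\bigr)\le \rho_{\mathrm{res}}(t)/a^2 .
\]
Substituting \cref{eq:supp:residual-radius} for $\rho_{\mathrm{res}}(t)$ completes the argument.

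The only delicate step is the vanishing of the cross term. It depends on the independence and zero-mean assumptions on $\epsilon$ stated in \cref{thm:tighter-bound}, so the proof should state these explicitly. Everything else is direct algebra.
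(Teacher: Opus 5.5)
Your proposal is correct and follows essentially the same route as the paper. You write $y_t-g(z)=(1-t)\sigma\epsilon+t\,r$ and $\tilde y_t=(1-t)\sigma\epsilon+ty$, kill the cross terms using the independence and zero mean of $\epsilon$, subtract the two identities under $\E\|r\|_2^2\le\eta\,\E\|y\|_2^2$, and finish with Markov's inequality on $\|y_t-g(z)\|_2^2$, and your explicit remarks on the independence assumption and on finiteness of second moments via \textbf{A2} are slightly more careful than the paper, which uses these implicitly.
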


\begin{proof}
Using the \ours bridge,
\[
  y_t-g(z)
  = (1-t)\sigma\epsilon + t(y-g(z))
  = (1-t)\sigma\epsilon + tr.
\]
Expanding the squared norm and using the independence and zero mean of $\epsilon$ gives
\[
  \E\|y_t-g(z)\|_2^2
  = (1-t)^2\E\|\sigma\epsilon\|_2^2+t^2\E\|r\|_2^2,
\]
which proves \cref{eq:supp:residual-radius}.
Similarly, the direct FM bridge satisfies
\[
  \tilde{y}_t=(1-t)\sigma\epsilon+ty,
\]
so the same expansion gives \cref{eq:supp:direct-radius}.
Substituting $\E\|r\|_2^2\le \eta\E\|y\|_2^2$ into \cref{eq:supp:residual-radius} and comparing with \cref{eq:supp:direct-radius} gives \cref{eq:supp:radius-gap}.
Finally, \cref{eq:supp:tube-bound} follows from Markov's inequality applied to the nonnegative random variable $\|y_t-g(z)\|_2^2$.
\end{proof}

\noindent
\textbf{Teaser interpretation.}
\cref{thm:localized-bridge} formalizes the qualitative picture in which residual refinement stays in a smaller tube around the coarse anchor, while direct generative decoding must transport noise toward the full target region.
The statement is intentionally about the \emph{correction} and the \emph{bridge geometry}, not about artificially reducing the final conditional output diversity.
At the Bayes optimum, the final conditional distribution should still match the true target distribution; \ours changes which part of the mapping is learned by the deterministic coarse prediction and which part is left to the generative refiner.

\section{Implementation Details}
\label{appx:impl}
\subsection{DRIFT Architecture}
Figure~\ref{fig:resflow_structure} illustrates the architecture of DRIFT. DRIFT consists of two main components: a Base Predictor and a Flow Refiner. The Base Predictor first produces a coarse continuous estimate $\hat{y}$ from the MLLM representation. The Flow Refiner then iteratively predicts the remaining residual from $\hat{y}$ to the ground-truth target.

The Flow Refiner contains three components: embedders, self-attention layers, and an output head. The embedders map the condition representation, the flow time $t$, and the bridge state $y_t$ into a shared hidden space. The self-attention layers aggregate information across these embedded tokens. Finally, the output head maps the updated representation to the flow prediction, which is combined with the $\hat{y}$ through the skip connection to produce the final prediction.

\begin{figure}[t!]
    \centering
    \includegraphics[width=0.7\linewidth]{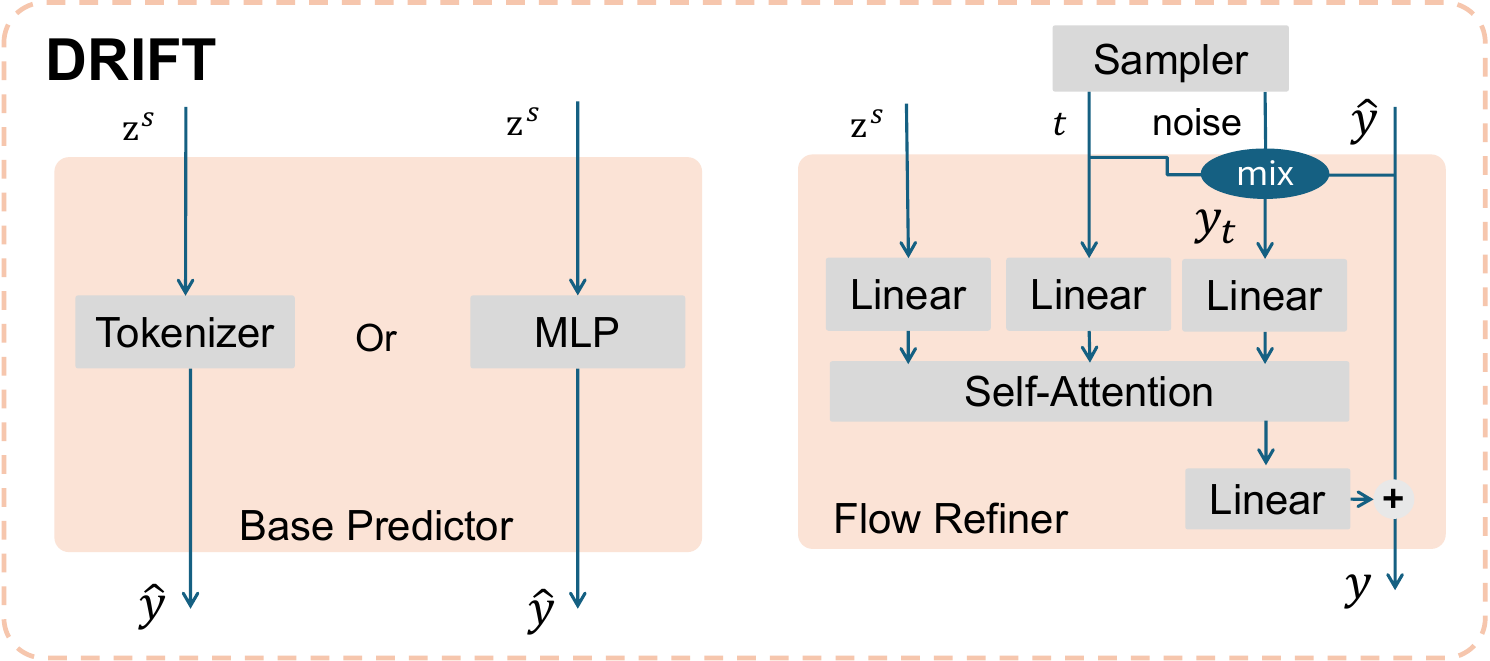}
    \vspace{-0.5em}
    \caption{The overview of the DRIFT's structure.}
    \label{fig:resflow_structure}
    \vspace{-1.5em}
\end{figure}

\subsection{VLA Implementation Details}
\label{appx:vla_implementation}
\paragraph{Training Data.}
We follow the InternVLA-M1~\cite{chen2025internvla} to prepare the training data.
For the experiments on Libero~\cite{liu2023libero}, we train on the Spatial, Object, Goal, and Long suites. Spatial, Object, Goal, and Long contain 432, 454, 428, and 379 training trajectories, respectively, and yield 53K, 67K, 52K, and 101K training samples, respectively. We normalize the action trajectories with the max and min values.

For the experiments on Simpler~\cite{li24simpler} WidowX, we follow the setting used in~\cite{chen2025internvla} and train the model on the Bridge and Fractal splits of Open X-Embodiment (OXE)~\cite{o2024open}. Bridge and Fractal contain 60K and 87K training trajectories, respectively, and yield 1.28M and 3.4M training samples, respectively. We normalize the action trajectories with the 99th percentile value for the trajectories.

\paragraph{Base VLMs.}
We use Qwen3-VL-2B-Instruct~\cite{bai2025qwen3} and OpenVLA~\cite{kim2024openvla} as the base models. For OpenVLA, when training and evaluating on Simpler, we use the checkpoint pretrained on OXE. When training and evaluating on each Libero suite, we use the corresponding Libero-finetuned OpenVLA checkpoint. During DRIFT adaptation, we add LoRA~\cite{hu2022lora} adapters to both the vision encoder and the LLM backbone. We use LoRA rank 32, $\alpha$= 64 for Libero experiments and rank 128, $\alpha$= 256 for Simpler experiments.

\paragraph{Special Tokens.}
We add one additional special token. For Qwen3-VL-2B-Instruct, we use the special token \texttt{<think>}, which already exists in the tokenizer but is not used during pre-training. For OpenVLA, we add a special token by extending the tokenizer, and we use both the added special token and the action tokens in the OpenVLA output as special tokens. We extract the hidden state of the special token from the last LLM layer as the condition $z^s$.

\paragraph{Base Predictor.}
When applying DRIFT to Qwen3-VL, we train a learnable MLP as the Base Predictor, which directly predicts a continuous action $\hat{y}$. We optimize this Base Predictor with MSE loss. When applying DRIFT to OpenVLA, we use OpenVLA's tokenizer as the Base Predictor. The tokenizer discretizes each continuous value into 256 bins represented by 256 action tokens. During training, we sample the action from the multinomial distribution predicted by the model and use it as $\hat{y}$. We optimize the tokenizer-based Base Predictor with negative log-likelihood loss.

\paragraph{Flow Refiner.}
The Flow Refiner uses 4 self-attention layers with 8 heads and a hidden dimension of 768. We optimize the Flow Refiner with MSE loss between the target velocity $v^*$ and the predicted velocity $v_\theta(y(t), t, z)$.

\paragraph{Training Hyper-parameters.}
We leverage both the main view and the wrist view to train DRIFT with Qwen3-VL-2B~\cite{bai2025qwen3}, and DRIFT learns to predict the next 8 and 4 consecutive actions in the future for the Libero and Simpler settings, respectively.
We only use the main view when training DRIFT with OpenVLA~\cite{kim2024openvla}, and DRIFT predicts one future action to match the original setting of OpenVLA.

We train DRIFT with batch size 16 and learning rate $4\times 10^{-5}$ for 30K steps on Spatial, Object, and Goal, and for 60K steps on Long. For Simpler WidowX, we train with batch size 128, learning rate $4\times 10^{-5}$, and 50K steps. We use cosine learning-rate decay with 10\% warmup steps. All the experiments are trained with AdamW~\cite{loshchilovdecoupled} optimizer on 8*H100 in 8-16 hours.

\paragraph{Inference and Evaluation.}
We use $K=10$ forward-Euler steps and one stochastic noise sample per query. We evaluate DRIFT on Libero-Spatial, Libero-Object, Libero-Goal, Libero-Long, and Simpler WidowX. On Libero, we run 50 trials for each of the 10 subtasks in every suite. On Simpler WidowX, we run 24 trials for each of 4 subtasks. We report the success rate as the metric.

\subsection{TVG Implementation Details}
\label{appx:vtg_impl}

We provide the implementation details for DRIFT on Temporal Video Grounding (TVG). We evaluate DRIFT under the ED-VTG protocol with ET-Chat as the MLLM backbone.

\paragraph{Training Data.}
\label{appx:timelens100k}
We train the TVG variant of DRIFT on TimeLens-100K~\cite{zhang2025timelens}. TimeLens-100K contains approximately $20$K source videos and $100$K query--segment temporal grounding annotations. Its source videos are collected from a mixture of video-language datasets, including CosMo-Cap~\cite{wang2024cosmo}, InternVid-VTime~\cite{wang2023internvid}, DiDeMo~\cite{hendricks2017didemo}, QuerYD~\cite{oncescu2021queryd}, and HiREST~\cite{zala2023hirest}. The temporal-grounding annotations are MLLM-generated end-to-end: an MLLM first identifies temporally distributed events in each video and then produces grounding queries with timestamp pairs. We use TimeLens-100K only as the training corpus and evaluate on the ED-VTG benchmarks, Charades-STA~\cite{gao2017charades} and ActivityNet-Captions~\cite{krishna2017dense}. The TimeLens-100K training annotations are distinct from the evaluation annotations used in these benchmarks.

\paragraph{Base VLMs.}
We use ET-Chat~\cite{liu2024etbench} as the backbone. We freeze the ET-Chat visual encoder and frame compressor, apply LoRA to the LLM attention projections, and train the LoRA adapters jointly with the Flow Refiner. We add three temporal special tokens, \texttt{<CT>}, \texttt{<VALUE>}, and \texttt{</CT>}, and train the LLM with its standard next-token objective to emit these markers. We use LoRA rank $32$, $\alpha=64$, and dropout $0.05$.

\paragraph{Input and Condition Representation.}
We follow the ET-Chat video pipeline: videos are sampled at 1 FPS, encoded with EVA-CLIP ViT-G/14, compressed by ET-Chat's frame compressor, and then fed into Phi-3-Mini-4K-Instruct (3.8B) as the LLM backbone. The frame compressor consists of a Q-Former with learnable queries, a context aggregator, and a projector. For temporal grounding, the target text marks each continuous boundary with \texttt{<CT><VALUE></CT>}. We use the LLM hidden states at the \texttt{<VALUE>} positions as condition $z^s$; 

\paragraph{Base Predictor.}
The coarse prediction is produced by a small MLP head over the \texttt{<VALUE>} hidden states. The prior head predicts the temporal segment in center-width coordinates, applies a $1.2\times$ width expansion factor, and converts the result to start-end coordinates.

\paragraph{Flow Refiner.}

The Flow Refiner uses the 2-layer self-attention described above with GELU activations, $\text{nhead}=16$, hidden dimension equal to LLM's $\text{hidden\_size}/2$.
It represents each temporal segment in center-width coordinates and predicts the velocity in residual center-width coordinates. 

\paragraph{Training Hyper-parameters.}
We train DRIFT on TimeLens-100K with per-device batch size $1$, learning rate $5\times 10^{-4}$ for 4 epochs with AdamW optimization on 8 A100 GPUs. The training run takes approximately 10 hours. The training objective uses flow MSE on the velocity.

\paragraph{Inference and Evaluation.}
The start and end timestamps are normalized to $[-1,1]$ by the video duration during training. At inference time, we unnormalize the prediction back to seconds and clip the final $(t_s,t_e)$ to the valid range $[0,\text{duration}]$. We use $K=10$ forward-Euler steps. Integration is performed in center-width coordinates; the result is converted back to start-end coordinates and then clipped to the valid video-time range.

ET-Chat~\cite{liu2024etbench} is instruction-tuned on E.T. Instruct 164K. According to the ET-Chat paper, its TVG samples are drawn from DiDeMo, TACoS, QueryD, and NaQ. ET-Chat does not report ActivityNet-Captions performance in the original paper, so we obtain the ET-Chat baseline in Table~\ref{tab:tvg_zs} by running the released checkpoint under the ED-VTG evaluation protocol.

We report Recall@1 at different IoU thresholds and mean IoU. Recall@1 at IoU threshold $\tau$ measures the fraction of queries whose top-1 predicted interval has IoU greater than $\tau$ with the ground-truth interval. mIoU is the mean IoU over all evaluated queries.

\subsection{Spatial Grounding Implementation Details}
\label{appx:spatial_grounding_impl}
\paragraph{Training Data.}
We use the training set of the RefCOCO~\cite{kazemzadeh2014referitgame}, RefCOCO+~\cite{mao2016generation}, and RefCOCOg~\cite{yu2016modeling}, which contains 120K, 120K, and 80K training samples, respectively. All the bounding boxes' coordinates are normalized between 0 and 1, and with the bounding box format of [cx,cy,w,h].

\paragraph{Base VLM.}
We use Qwen3-VL-2B-Instruct as the base model. We add the LoRA~\cite{hu2022lora} adaptor with rank 32, $\alpha$= 64, on both the vision encoder and the LLM backbone.

\paragraph{Special Tokens.}
We add one additional special token <think>, which already exists in the tokenizer but was never used during training in the Instruct version of the model. We extract the hidden state of the special token from the last layer of the LLM as condition $c$.

\paragraph{Base Predictor.}
We train a learnable MLP as the base predictor, which directly predicts $\hat{y}$. We apply MSE loss to update the base predictor.

\paragraph{Flow Refiner.}
We use 8-layer self-attention, with 8 heads and hidden dimension of 768 in the flow refiner. We optimize the Flow Refiner with MSE loss between the target velocity $v^*$ and the predicted velocity $v_\theta(y(t), t, z)$. 

\paragraph{Training Hyper-parameters.}
We train DRIFT with batch size 64 and learning rate $4\times 10^{-5}$ for 1 epoch (about 5K steps). We use cosine-decay learning rate scheduling and with 3\% warmup steps. All the experiments are trained with AdamW optimizer on 8*A100 in 2 hours.

\paragraph{Inference and evaluation} $K=10$ forward-Euler steps; one stochastic noise sample per query (deterministic). Following the setting used in the Qwen3-VL, we evaluate the validation set of DRIFT on RefCOCO, RefCOCO+, and RefCOCOg, and report the Accuracy@1 as the metric.

\subsection{FastWAM VLA Implementation Details}
\label{appx:fastwam_implementation_details}
\paragraph{Training Data.}
Following the training setting of the FastWAM~\cite{yuan2026fast}, we use all the splits (Spatial, Object, Goal, and Long) in Libero to train the model. In total, it has 272K training samples. 

\paragraph{Base VLM.}
We use the same architecture for the video generator (VideoDiT) and the action generator (ActionDiT) as the FastWAM. We follow FastWAM's implementation to initialize the model before the training.

\paragraph{Base Predictor.}
We use a 2-layer Qformer, with 16 learnable queries, 8 heads, hidden dimension 1024, to aggregate the information from the embedded text embedding and the vision token encoded by the VAE~\cite{wan2025wan}, and then we apply a mean operation on the query tokens from the Qformer. We send the averaged query token to MLP to predict the $\hat{y}$.

\paragraph{Flow Refiner.}
We use the ActionDiT in the FastWAM structure as the Flow Refiner, and we optimize the Flow Refiner with MSE loss between the target velocity $v^*$ and the predicted velocity $v_\theta(y(t), t, z)$.

\paragraph{Training, Inference and Evaluation.}
We use the same training and evaluation setting as FastWAM. We evaluate DRIFT on Libero-Spatial, Libero-Object, Libero-Goal, and Libero-Long. On Libero, we run 50 trials for each of the 10 subtasks in every suite. We report the success rate as the metric. All the experiments are trained on 8*H200 in 24 hours.

\section{Additional Visualization}
\label{appx:visualization}




Figure~\ref{fig:tvg_additional_visual} presents additional visualizations on TVG, comparing the start and end timestamps predicted by different modules. Samples 1--4 show that DRIFT achieves more accurate temporal boundaries by iteratively refining the coarse prediction using the flow refiner. Samples 5--6 illustrate error patterns shared by all decoders. These errors are mainly caused by ambiguous or inaccurate annotations: in Sample 5, the action \textit{takes a bite of the sandwich} lasts only about two seconds, but the ground-truth boundary annotates it as a five-second interval; in Sample 6, the query \textit{sits on a bed} is ambiguous, as it can refer either to the action of sitting down or to the state of already being seated. These cases suggest that further improving TVG performance may require both more carefully curated training and validation annotations, as well as models that are better able to handle ambiguous natural-language queries.

\begin{figure}[t!]
    \centering
    \includegraphics[width=1.0\linewidth]{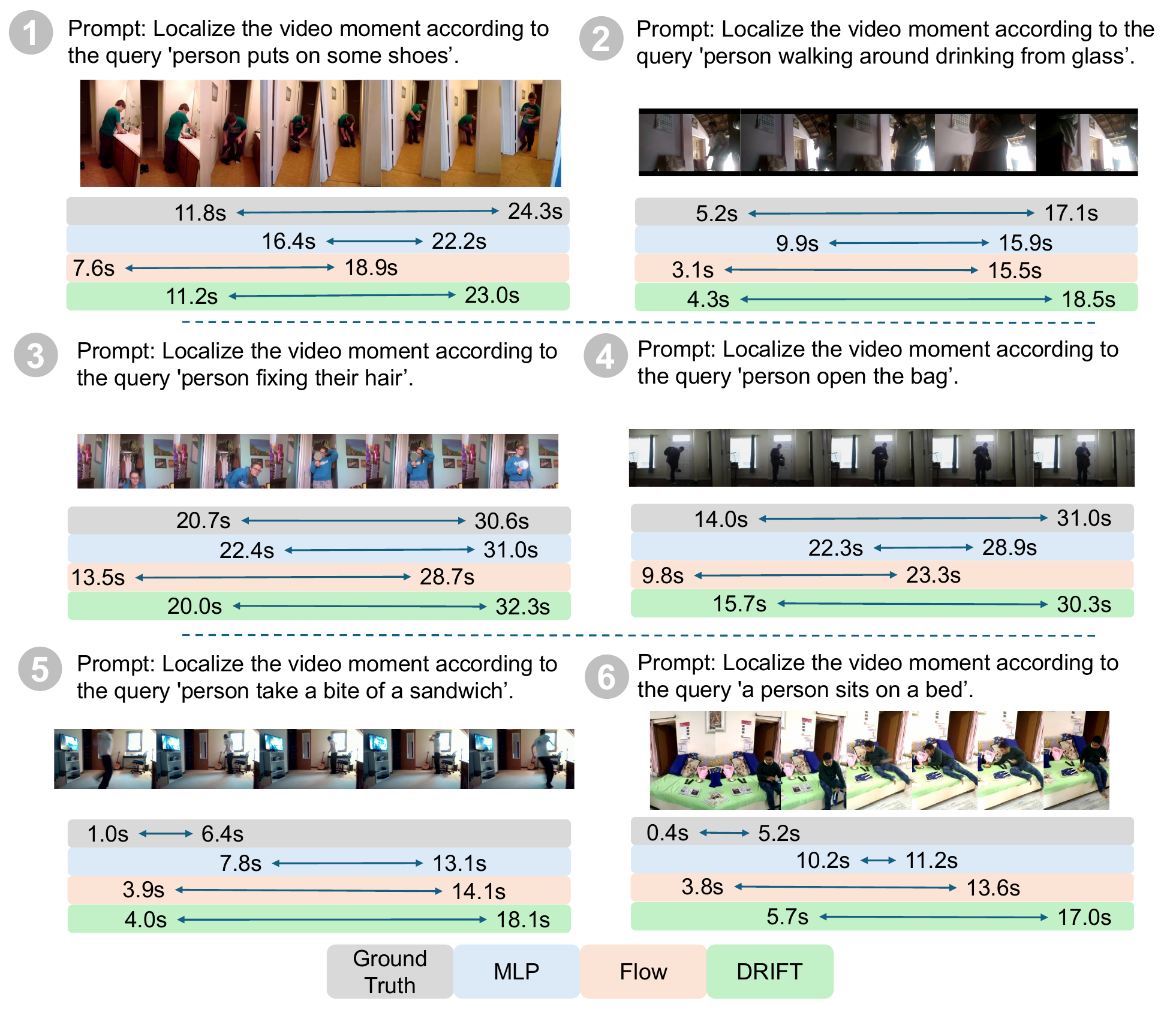}
    \vspace{-1.5em}
    \caption{Additional visualization on TVG.}
    \label{fig:tvg_additional_visual}
    \vspace{-0.5em}
\end{figure}

\section{Additional ablation}
\label{appx:add_ablation}

\begin{table*}[t]
\centering

    \centering
    \scalebox{0.8}{
    \begin{tabular}{l|c}  
    \toprule
    Action Decoder & Libero-Long \\
    \midrule
    Flow  & 94.2 \\
    \ours (w/ rand Base Predictor)  & 94.0 \\
    \ours & 96.4 \\
    \bottomrule
    \end{tabular}
    }
    \caption{With a random base predictor (0\% accuracy), \ours falls back to flow matching and shows a similar performance.}
    \vspace{-1.5mm}
    \label{tab:effect_of_prior}
\end{table*}
We follow the setting used in section~\ref{sec:exp:ablation_study} for the following experiment.

\textbf{Corrupted Base Predictor}. In Table~\ref{tab:effect_of_prior}, we construct a deliberately weak predictor. Specifically, we train \ours without applying the auxiliary loss $\mathcal{L}_{p}$ to the base predictor (MLP), while also stopping gradients from the refinement module to the base predictor. The base predictor thus remains untrained throughout optimization and achieves 0\% accuracy on Libero-Long. Under this setting, \ours effectively degenerates to a flow matching model and achieves comparable performance. 


\end{document}